\def\eqref#1{equation~\ref{#1}}
\def\Eqref#1{Equation~\ref{#1}}
\def\1{\bm{1}}
\DeclareMathAlphabet{\mathsfit}{\encodingdefault}{\sfdefault}{m}{sl}
\SetMathAlphabet{\mathsfit}{bold}{\encodingdefault}{\sfdefault}{bx}{n}
\DeclareMathOperator*{\argmin}{arg\,min}
\newcommand\m[1]{\ensuremath{\mathcal{#1}}}
\newcommand{\feature}{\bm{\psi}}
\newcommand{\cost}{\bm{y}}
\newcommand{\predcost}{\bm{\hat{y}}}
\newcommand{\decisionvar}{\bm{w}}
\newcommand{\solution}{\decisionvar^\star}
\newcommand{\ML}{\m{M}_{\theta}}
\newcommand{\MLnoomega}{\m{M}}
\newcommand{\proxy}{\m{G}_{\theta}}
\newcommand\regret{\m{{\mathit{Regret}}}}
\newcommand{\feas}{\m{F}}
\newcommand{\spoplus}{\m{L}_{\mathit{SPO}^{+} }} 
\newcommand{\SCE}{  \m{L}_{SCE}  } 
\newcommand{\dys}{\mathit{DYS}}
\newcommand{\smooth}{\mathit{SmOPT}}
\newtheorem{theorem}{Theorem}
\newtheorem{corollary}[theorem]{Corollary}
\newtheorem{proposition}{Proposition}
\definecolor{Green}{cmyk}{1.0,0.,0.9975,0}
\definecolor{olive}{rgb}{0.73, 0.72, 0.42}
\newcommand{\BibTeX}{B\kern-.05em{\sc i\kern-.025em b}\kern-.08em\TeX}
\tikzstyle{mynode}=[thick,inner sep={.75\pgflinewidth}, draw=black,fill=white,circle,minimum size=3]
\newsavebox{\mybox}
\sbox{\mybox}{%
\begin{tikzpicture}[x=0.35cm,y=0.5cm]
  \readlist\Nnod{3,4,4,1} 
  \foreachitem \N \in \Nnod{ 
    \foreach \i [evaluate={\x=\Ncnt; \y=\N/2-\i+0.5; \prev=int(\Ncnt-1);}] in {1,...,\N}{ 
      \node[mynode] (N\Ncnt-\i) at (\x,\y) {};
      \ifnum\Ncnt>1 
        \foreach \j in {1,...,\Nnod[\prev]}{ 
          \draw[thick] (N\prev-\j) -- (N\Ncnt-\i); 
        }
      \fi 
    }
  }
\end{tikzpicture}
}
\begin{document}


\begin{frontmatter}


\paperid{0517} 


\title{Minimizing Surrogate Losses for Decision-Focused Learning using Differentiable Optimization}

\author[A]{\fnms{Jayanta}~\snm{Mandi} \thanks{Corresponding Author. Email: jayanta.mandi@kuleuven.be}}
\author[A]{\fnms{Ali İrfan}~\snm{Mahmutoğulları}}
\author[A]{\fnms{Senne}~\snm{Berden}} 
\author[A]{\fnms{Tias}~\snm{Guns}}
\address[A]{ KU Leuven, Department of Computer Science, Belgium}

\begin{abstract}
Decision-focused learning (DFL) trains a machine learning (ML) model to predict parameters of an optimization problem, to directly minimize decision regret, i.e., maximize decision quality. Gradient-based DFL requires computing the derivative of the solution to the optimization problem with respect to the predicted parameters. However, for many optimization problems, such as linear programs (LPs), the gradient of the regret with respect to the predicted parameters is zero almost everywhere. Existing gradient-based DFL approaches for LPs try to circumvent this issue in one of two ways: (a) smoothing the LP into a differentiable optimization problem by adding a quadratic regularizer and then minimizing the regret directly or (b) minimizing surrogate losses that have informative (sub)gradients. In this paper, we show that the former approach still results in zero gradients, because even after smoothing the regret remains constant across large regions of the parameter space. To address this, we propose minimizing surrogate losses, even when a differentiable optimization layer is used and regret can be minimized directly. Our experiments demonstrate that minimizing surrogate losses allows differentiable optimization layers to achieve regret comparable to or better than surrogate-loss based DFL methods.
Further, we demonstrate that this also holds for DYS-Net, a recently proposed differentiable optimization technique for LPs, that computes \emph{approximate} solutions and gradients through operations that can be performed using feedforward neural network layers. Because DYS-Net executes the forward and the backward pass very efficiently, by minimizing surrogate losses using DYS-Net, we are able to attain regret on par with the state-of-the-art while reducing training time by a significant margin.
\end{abstract}
\end{frontmatter}
\section{Introduction}
Many real-world decision-making problems can be cast as combinatorial optimization problems.
Some parameters of these optimization problems (e.g., production costs or travel times) are typically unknown due to uncertainty when the decisions are made.
In the context of data-driven contextual optimization~\citep{contextual2025},
the problems of predicting parameters of an optimization problem can be viewed as ``predict-then-optimize'' (PtO) problems, including two key steps -- the \textit{prediction} of the unknown parameters given the observed contextual information and the subsequent \textit{optimization} using those predicted parameters.
Decision-focused learning (DFL) \citep{mandi2023decision} trains machine learning (ML) models to predict uncertain parameters by \emph{directly} minimizing the decision loss, a measure of the quality of the decisions made using the predicted parameters.

Among ML models, neural networks have emerged as highly successful due to their ability to learn complex patterns from data \citep{alzubaidi2021review}. However, training neural networks relies on gradient-based learning, which poses a fundamental challenge for DFL.
Gradient-based DFL methods entail computing the partial derivatives of the optimization problem with respect to the predicted parameters. However, for combinatorial optimization problems, these partial derivatives are zero almost everywhere, since slight parameter changes rarely alter the solution, except at transition points, where the solution changes abruptly, and the partial derivatives do not exist.
%
In this paper we consider combinatorial optimization problems which can be formulated as integer linear programs (ILPs) or mixed integer linear programs (MILPs).
To obtain informative gradients for DFL on (MI)LPs, previous works employ two broad categories of approaches: (a) smoothing the (MI)LP into a differentiable optimization problem~\citep{aaai/WilderDT19, MandiNEURIPS2020, Mipaal}, and then minimizing the decision loss by differentiating through the `smoothed' problem, or (b) using surrogate loss functions \citep{elmachtoub2022smart, mulamba2020discrete, mandi22a}, for which informative gradients or subgradients exist.

\paragraph{Contribution.}
In this paper, we focus on the first category of DFL approaches. 
In this category, the (MI)LP is turned into a differentiable optimization problem as follows: the linear programming (LP) relaxation of the (MI)LP is first considered by removing the integrality constraints on the variables. 
Then the LP is converted into `smooth' quadratic program (QP) by adding a quadratic regularizer to the objective function.
In practice, the weight on the regularizer is kept low so that the smoothing strength remains small and does not overshadow the original objective.
As mentioned earlier, approaches in this category minimize the decision loss on the training dataset by smoothing the LP into a QP and then computing the Jacobian of its solution with respect to the predicted parameters using a differentiable solver (e.g., \textsl{Cvxpylayers} \citep{agrawal2019differentiable}). The motivation for minimizing decision loss on the training set follows the principle of empirical risk minimization (ERM) \citep{vapnik1991}, which suggests that improvements on the training data should lead to lower decision loss on unseen test instances.

Our main argument is that direct minimization of decision loss using differentiable optimization is \emph{ineffective} for gradient-based DFL. 
This is because QP smoothing makes the optimization problem differentiable by smoothing out abrupt transitions in the solution. However, this smoothing often leads to constant solutions across large regions of the parameter space. As a result, the gradient of the decision loss with respect to the predicted parameters becomes zero during backpropagation, rendering it ineffective for gradient-based DFL.

To overcome this challenge, we propose minimizing surrogate losses instead, even when direct minimization of the decision loss is feasible using a differentiable optimization layer.
At first glance, this might seem counterintuitive: \textit{Why would one optimize a surrogate loss when the decision loss can be directly minimized using a differentiable optimization solver?} We answer this question by demonstrating that the surrogate loss provides gradients that are useful for gradient-based training.
We empirically demonstrate that minimizing the surrogate losses using differentiable optimization solvers provides gradients that lead to lower decision loss on unseen instances.

Furthermore, training in DFL is computation-intensive. 
Both types of approaches require solving the (smoothed) (MI)LP for each training instance with the predicted parameters in each epoch to compute the decision loss. This poses significant scalability challenges, and increases training time.
To ease the computational burden, 
\citet{mckenzie2024learning} recently developed a fully-neural optimization layer, DYS-Net.
Unlike Cvxpylayers, DYS-Net computes the solution to the quadratically regularized LP and the gradient of the solution using neural operations such as matrix vector multiplications. 
By doing so, DYS-Net offers the potential to significantly speed up DFL training.
However, this potential has not been fully established because
\citet{mckenzie2024learning} use datasets where the true parameters are \textit{not} observed, while most DFL benchmarks~\citep{pyepo} assume otherwise. So, DYS-Net has not been compared to the state-of-the-art DFL methods in this setting.
We empirically show that our proposal of minimizing surrogate losses with DYS-Net yields test-time regret comparable to state-of-the-art DFL methods,  which \emph{direct regret minimization cannot match due to the zero-gradient issue.}
In summary, this paper makes the following contributions:
\begin{itemize}
    \item We demonstrate that although `smoothing' turns the (MI)LP into a differentiable optimization problem, the gradient of the empirical decision loss remains zero over most of the parameter space.
    \item We empirically show that for quadratic smoothing, minimizing surrogate losses leads to lower decision loss on test data than minimizing the decision loss. 


    \item 
    By minimizing the surrogate loss, using DYS-Net as the differentiable solver, we achieve decision loss comparable to state-of-the-art methods across three types of optimization problems, while significantly reducing training time (often by a factor of three).
\end{itemize}

\section{Background}
This work focuses on LPs, ILPs and MILPs.
An LP can be represented in the following form:
\begin{equation}
\label{eq:big_LP}
\min_{\decisionvar} \cost^\top \decisionvar \; \; \text{s.t.} \;\;A \decisionvar = \mathbf{b}; \;\;C\decisionvar \leq \mathbf{d}; 
\end{equation}
\noindent where $\decisionvar \in  \mathbb{R}^K$ is a decision variable.
The optimal solution for a given cost parameter, $\cost \in  \mathbb{R}^K$, is denoted by
$\solution (\cost)$. 
Note that any constraints in the form of $C \decisionvar \leq \mathbf{d} $ can be
converted to equality by introducing slack variables and the LP can be transformed in the following standard form:
\begin{equation}
\label{eq:standard_LP}
\min_{\decisionvar} \cost^\top \decisionvar \; \; \text{s.t.} \;\;A \decisionvar = \mathbf{b}; \;
    \decisionvar \geq \bm{0}
\end{equation}
For brevity, we use $\feas$ to denote the feasible space. So, for the standard LP formulation, 
$\feas = \{ \decisionvar \in \mathbb{R}^K \mid A \decisionvar = \mathbf{b}  \; ;
    \decisionvar \geq \bm{0} \}$.
    Unless it is explicitly stated otherwise, $\solution$ will denote  $ \solution (\cost)$.
ILPs differ from LPs in that the all decision variables $\decisionvar$ are restricted to integer values. MILPs generalize ILPs by allowing only a subset of variables to be integer, while the rest can be continuous. Hence, MILPs can be seen as a superset that includes both LPs and ILPs. Although we describe our approach for MILPs, it naturally applies to ILPs and LPs as well.


We consider PtO for MILPs , where the vector of cost parameters $\cost$ is to be predicted using
a vector of contextual information, $\feature$, correlated with $\cost$.
PtO problems comprise two steps: predicting unknown parameters given contextual information and solving the MILP with these predicted parameters.
In PtO problems, an ML model, $\ML$ (with trainable parameters $\theta$), is trained using past observation pairs $\{ (\feature_i, \cost_i) \}_{i=1}^N$ to map $\feature \rightarrow \cost$. Due to their successful performance in many predictive tasks, neural networks are commonly used as the predictive model in PtO settings \citep{aaai/WilderDT19}. We denote the predicted cost produced by the neural network, as $\predcost$, i.e., $\predcost = \ML(\feature)$.

A straightforward approach to the PtO problem is to train $\ML$ to minimize the prediction error between $\cost$ and $\predcost$.
However, previous works \citep{elmachtoub2022smart,mandi2020smart, aaai/WilderDT19} show that such a \emph{prediction-focused approach} can produce suboptimal decision performance.
By contrast, in DFL, the ML model is directly trained to optimize 
the decision loss, which reflects the quality of the resulting decisions.
When only the parameters in the objective function are predicted, \emph{regret}, which measures the suboptimality of a decision resulting from a prediction, is the decision loss of interest.
In DFL, one can consider other decision losses  (Appendix \ref{appendix:pto_des}), such as squared decision errors (SqDE) between $\solution$ and $\decisionvar$.
\emph{Regret} and SqDE can be written in the following form: 
\begin{equation}
\regret(\decisionvar, \cost) \coloneqq \cost^\top \decisionvar - \cost^\top \solution, \quad 
\mathit{SqDE}(\decisionvar, \cost) \coloneqq \|\solution(\cost) - \decisionvar\|^2 \nonumber
\label{eq:regret}
\end{equation}
The DFL approach trains $\ML$ by minimizing $\frac{1}{N}\sum_{i=1}^N \regret (\decisionvar^*(\ML(\feature_i)), \cost_i)$, the empirical risk minimization counterpart of $\mathbb{E}[\regret (\decisionvar^*(\ML(\feature)), \cost )]$.
This minimization of regret in gradient descent-based learning requires backpropagation through the optimization problem, which involves computing the Jacobian of $\solution(\predcost)$ with respect to $\predcost = \ML(\feature)$.
While $\frac{d \solution(\predcost)}{d \predcost}$ can be computed for convex optimization problems through implicit differentiation \citep{agrawal2019differentiable, amos2017optnet}, it raises difficulties when the optimization problem is combinatorial.
This is because when the parameters of a combinatorial optimization problem change, the solution either remains unchanged or shifts abruptly, meaning the gradients are zero almost everywhere, and undefined at transition points.

There are two primary approaches to implementing DFL for MILPs:
(a) smoothing the MILP to a differentiable convex optimization problem, and
(b) using a surrogate loss that is differentiable.
We briefly explain these approaches below. For more details on DFL techniques, we refer to the survey by \citet{mandi2023decision}.

\subsection{Differentiable Optimization by Smoothing of Combinatorial Optimization}\label{sect:smoothing}
To address the zero-gradient issue, methodologies in this category modify the optimization problem into a differentiable one by `smoothing' and then analytically differentiating the smoothed optimization problem (see Appendix \ref{Appendix:dflindetail} for a  detailed explanation).
For LPs, \citet{aaai/WilderDT19} propose transforming the LPs into `smoothed' QPs by augmenting the objective function with the square of the Euclidean norm of the decision variables.
Formally, they solve the following quadratically regularized QP:
\begin{equation}
     \label{eq:qptl}
     \min_{\decisionvar}  \predcost^\top \decisionvar + \mu \lVert \decisionvar \rVert^2_2  \;\; \text{s.t.} \;\;A \decisionvar = \mathbf{b}  \; ;
    \decisionvar \geq \bm{0}
\end{equation}
\noindent
\paragraph{Exact differentiable optimization of the QP.} The QP problem can be solved using differentiable solvers, such as \textsl{OptNet} \citep{amos2017optnet} or \textsl{Cvxpylayers} \citep{agrawal2019differentiable}.
The QP smoothing approach has been applied in various DFL works \citep{Mipaal,ferber2023predicting,mckenzie2024learning}. 
\citet{MandiNEURIPS2020} consider another form of smoothing by adding a logarithmic barrier term into the LP.
When the underlying optimization problem is an MILP or ILP, smoothing of the LP resulting from the continuous relaxation of the (MI)LP is carried out. Note that smoothing is applied \emph{only during training to enable backpropagation through the smoothed QP}; during testing and evaluation, the true MILP is solved.

When a differentiable solver like Cvxpylayers solves the QP, it computes the exact solution and the exact derivative by solving the QP using interior-point methods, and then differentiating the optimality conditions (i.e., the KKT optimality conditions). This requires computationally intensive matrix factorization such as LU decomposition.
\paragraph{Approximate differentiable optimization by DYS-Net.}
A recent method called DYS-Net \citep{mckenzie2024learning}
avoids the computational cost by not computing the exact solution and the exact derivatives.
To compute an approximation of the solution of Eq.~\ref{eq:qptl}, DYS-Net uses projected gradient descent \citep{Duchi}. However, projecting into the feasible space of an LP is itself a complex operation. 
This complex projection is avoided through a fixed-point iteration algorithm based on a three-operator splitting method \citep{davis2017three}. This version does not perform exact projections, but approximate projections. 
Note, for standard form LPs, the feasible space can be expressed as:
\[
\feas \equiv \feas_1 \cap \feas_2 \text{ where }
\feas_1 \doteq \{ A \decisionvar = b \} \text{ and } \feas_2 \doteq \{ \decisionvar \geq 0\}.
\]
Although projecting directly into $\feas $ is a complex task, projecting only into $\feas_1$ or $\feas_2$ are much simpler tasks, as shown below:
\begin{equation*}
P_{\feas_1} (\decisionvar) \doteq 
\decisionvar - A^{\dagger}(A \decisionvar - b) \text{ and } P_{\feas_2} (\decisionvar) \doteq \max \{ 0, \decisionvar \}
\end{equation*}
where $A^{\dagger}$ is the pseudo inverse of $A$ and $\max$ operates element-wise.
\citet{cristian2023end} propose continuously iterating between $P_{\feas_1}$ and $P_{\feas_2}$. 
In contrast, \citet{mckenzie2024learning} propose the following fixed-point iteration:
\begin{align}
    \decisionvar_{\iota+1} = 
    \decisionvar_\iota - P_{\feas_2} (\decisionvar_\iota) + P_{\feas_1} \big(  (2-\alpha \mu) P_{\feas_2}(\decisionvar_\iota) - \decisionvar_\iota - \alpha \cost \big)
    \label{eq:DYS_iteration}
\end{align}
\noindent which converges to $\solution(\cost)$ as $\iota \rightarrow \infty$. 

The first efficiency gain of DYS-Net comes from its use of this fixed-point iteration in the forward pass, to approximate the solution of Eq.~\ref{eq:qptl}.
The second efficiency gain is achieved in the gradient computation in the backward pass. 
Instead of obtaining the exact derivative by computing the inverse of the Jacobian of the fixed-point iteration,
%
they use Jacobian-free backpropagation \citep{fung2022jfb}, by replacing the Jacobian with an identity matrix. They show that this provides useful approximate gradients for backpropagation.  
This approximation turns Eq.~\ref{eq:DYS_iteration} into a series of matrix operations which can be implemented using standard neural network layers.
We denote the solution obtained by this method as $\dys (\cost)$.
%
After obtaining the solution using DYS-Net, \citet{mckenzie2024learning} minimizes $\mathit{SqDE}$ and backpropagates it through DYS-Net.
In contrast, we will propose to minimize surrogate losses after getting the solution using DYS-Net.
\subsection{Surrogate Losses for DFL}
Surrogate loss functions are used for training in DFL because 
they are crafted to have non-zero (sub)gradients while also reflecting the decision loss -- as regret decreases, surrogate loss functions decrease as well.
We discuss two surrogate losses, which we will use later.
\subsubsection{Smart Predict then Optimize Loss (SPO+)}\label{sec:sub_spo+}
The SPO+ loss \citep{elmachtoub2022smart}, a convex upper bound of $\regret (\solution (\predcost), \cost)$,
is one of the first and most widely used surrogate losses for linear objective optimization problems. 
Instead of minimizing $\regret$, they propose to minimize $\spoplus (\solution (\predcost), \cost)$, a convex upper bound of the regret. $\spoplus (\solution (\predcost), \cost)$ can be expressed in the following form:
\begin{align}
     \spoplus (\solution (\predcost), \cost) =(2\predcost - \cost)^\top \solution - (2\predcost - \cost)^\top \solution (2\predcost - \cost)
    \label{eq:spoplus}
\end{align}
\subsubsection{Contrastive Loss}
\label{sec:sub_con}
\citet{mulamba2020discrete} propose the following surrogate loss, $\m{L}_{SCE}^{\hat{y}} (\solution (\predcost), \cost)$ based on self-contrastive estimation (SCE)~\citep{gutmannnoise}.
%
\begin{align}
    \label{eq:SCE}
    \m{L}_{SCE}^{\hat{y}} (\solution (\predcost), \cost)    =  
    \predcost^\top \solution - \predcost^\top \solution (\predcost)
\end{align}
Note that this loss is similar to $\spoplus$, except that $2\predcost - \cost$ is replaced with $\predcost$.
One shortcoming of this loss is that for linear objectives, its minimum, which is zero, can be 
achieved either when $\solution(\predcost) = \solution$ or when $\predcost = 0$.
To prevent minimizing the loss by always predicting $\predcost = 0$, they further propose the following variant for linear objectives:
\begin{align}
    \label{eq:SCE_linear_objectives}
   \m{L}_{SCE}^{(\hat{y} - y)} (\solution (\predcost), \cost) =  (\predcost - \cost)^\top \solution -  (\predcost - \cost)^\top \solution (\predcost) \nonumber \\= 
    \predcost^\top \solution - \predcost^\top \solution (\predcost) + 
    \cost^\top \solution (\predcost)  - \cost^\top \solution (\cost)
\end{align}
Because in this work we focus on linear problems, we will primarily refer to $\m{L}_{SCE}^{(\hat{y} - y)}$ and will henceforth simply denote it by $\m{L}_{SCE}$, except when explicitly distinguishing between $\m{L}_{SCE}^{\hat{y}}$ and $\m{L}_{SCE}^{(\hat{y} - y)}$.

Nevertheless, computation of $\spoplus$ or $\SCE$ 
entails solving the MILP with the predicted $\predcost$ for every training instance in each epoch, which creates a significant computational burden. \citet{mandi2020smart} propose to minimize the LP relaxation for computing $\spoplus$.
\citet{mulamba2020discrete} address the computational issue by using solution caching instead of repeatedly solving the MILP.
The CaVE technique, proposed by \citet{cave24} for ILPs, minimizes a different surrogate loss, which measures the angle between the predicted cost vector and the ‘normal cone’ of the true optimal solution.
\section{The Gradients of Surrogate Loss Functions}\label{section:surrogateloss}
In this work, we investigate the use of surrogate losses in decision-focused learning, even when a differentiable optimization solver is used. After smoothing the optimization problem, the regret can be minimized directly using the solution of this layer. However, we explore whether surrogate losses still offer advantages in this setting.
\subsection{Surrogate Losses without Differentiable Optimization}\label{sect:surrogate_without}
Note that if we minimize $\SCE$ or $\spoplus$ using a non-differentiable solver, the solution, $\frac{\partial \solution (\cost) }{\partial \cost}$ cannot be computed.
In this case, the SPO+ and SCE losses are minimized using gradients, $\nabla_{\spoplus }$ and $\nabla_{\SCE }$  respectively, which can be expressed as follows:
\begin{align}
    \nabla_{\spoplus }  &= 2 ( \solution  - \solution(2\predcost - \cost) ) \\
     \nabla_{\SCE } &= ( \solution  - \solution(\predcost ) )
\end{align}
We highlight that in such cases 
$\solution (\predcost)$ will be treated as a constant for gradient computation. 
Because of this reason the gradient of $\SCE^{(\hat{y} - y)}$ (\ref{eq:SCE_linear_objectives}) would be same as the gradient of $\m{L}_{SCE}^{\hat{y}}$ (\ref{eq:SCE}). 
Hence, in the absence of a differentiable optimization layer, minimizing either of these losses by gradient descent results in the same outcome.
Next we will compare between minimizing $\spoplus$ and $\SCE$ using a non-differentiable solver.
First, in Theorem \ref{label:pretheorem}, we show that if $\nabla_{\SCE}$ or $\nabla_{\spoplus}$ becomes zero, then the solution to the true cost is also an optimal solution under the predicted cost, assuming $\solution(\cost)$ is unique.
\begin{theorem}
    \label{label:pretheorem}
        Suppose $\m{Y}_{ \mathit{SCE} } (\cost) = \{ \predcost:  \nabla_{ \SCE }  (\solution (\predcost), \cost) = 0 \}$ and
    $\m{Y}_{\mathit{SPO}^{+}} (\cost) = \{ \predcost:  \nabla_{ \spoplus }  (\solution (\predcost), \cost) = 0 \}$.
    Then, $\solution(\cost)$ is an optimal solution to (\ref{eq:standard_LP}) for any $\predcost \in \m{Y}_{\mathit{SCE}}(\cost)$ or $\predcost \in \m{Y}_{\mathit{SPO}^{+}}(\cost)$. (Proof is provided in Appendix \ref{proof_ppretheorem})
\end{theorem}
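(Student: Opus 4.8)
The plan is to treat the two cases separately, since both gradients vanish exactly when a certain auxiliary optimal solution coincides with $\solution(\cost)$, and then to translate that coincidence into an optimality statement for the predicted cost $\predcost$. The $\SCE$ case is essentially immediate from the definition of $\nabla_{\SCE}$, while the $\spoplus$ case rests on the elementary observation that the set of cost vectors under which a fixed feasible point is optimal (its normal cone in $\feas$) is closed under convex combinations.

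For the $\SCE$ case, I would take $\predcost \in \m{Y}_{\mathit{SCE}}(\cost)$, so that $\nabla_{\SCE} = \solution(\cost) - \solution(\predcost) = 0$, giving the vector equality $\solution(\cost) = \solution(\predcost)$. By the very definition of $\solution(\predcost)$, this point minimizes $\predcost^\top \decisionvar$ over $\feas$; hence so does $\solution(\cost)$, which is precisely the assertion that $\solution(\cost)$ is optimal for (\ref{eq:standard_LP}) under the predicted cost $\predcost$.

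For the $\spoplus$ case, I would take $\predcost \in \m{Y}_{\mathit{SPO}^{+}}(\cost)$, so that $\nabla_{\spoplus} = 2\big(\solution(\cost) - \solution(2\predcost - \cost)\big) = 0$, i.e. $\solution(\cost) = \solution(2\predcost - \cost)$. This makes $\solution(\cost)$ optimal under two cost vectors at once: under $\cost$ by definition, and under $2\predcost - \cost$ by the equality just derived. Writing both optimality conditions, for every $\decisionvar \in \feas$ we have $\cost^\top \solution(\cost) \le \cost^\top \decisionvar$ and $(2\predcost - \cost)^\top \solution(\cost) \le (2\predcost - \cost)^\top \decisionvar$. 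Since $\predcost = \tfrac{1}{2}\cost + \tfrac{1}{2}(2\predcost - \cost)$, averaging these inequalities yields $\predcost^\top \solution(\cost) \le \predcost^\top \decisionvar$ for all $\decisionvar \in \feas$, so $\solution(\cost)$ is optimal under $\predcost$. Equivalently, $\predcost$ lies in the normal cone of $\feas$ at $\solution(\cost)$ as the midpoint of $\cost$ and $2\predcost - \cost$.

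I do not expect a genuine obstacle; the only real content is the midpoint/convex-cone step in the $\spoplus$ case, and the one point needing care is the role of the uniqueness hypothesis on $\solution(\cost)$. Uniqueness guarantees that the solver outputs appearing in the gradients are well-defined single points, so that the vanishing-gradient conditions force the exact vector equalities $\solution(\cost) = \solution(\predcost)$ and $\solution(\cost) = \solution(2\predcost - \cost)$ rather than mere coincidence of objective values. Without it, a degenerate LP with multiple optima could make $\solution(\cdot)$ ambiguous and render the equalities vacuous; with it, each implication above is clean.
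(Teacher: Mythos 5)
Your proposal is correct and follows essentially the same route as the paper: in both cases the vanishing gradient is read off as the vector equality $\solution(\cost)=\solution(\predcost)$ (resp.\ $\solution(\cost)=\solution(2\predcost-\cost)$), and the $\spoplus$ case is settled by combining the optimality inequalities under $\cost$ and $2\predcost-\cost$. Your ``midpoint of two normal-cone memberships'' phrasing is just a repackaging of the paper's rearrangement $\predcost^\top(\decisionvar'-\solution(\cost)) \geq \tfrac{1}{2}\cost^\top(\decisionvar'-\solution(\cost)) \geq 0$, so there is nothing to add.
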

Next, we present Theorem \ref{theorem:1}, which shows that minimizing $\spoplus$ results in predicted cost parameters $\predcost$ that are more robust to perturbations, compared to minimizing $\SCE$ under the uniqueness assumption of the solution.
\begin{theorem}
\label{theorem:1} 
    For any $\predcost \in \m{Y}_{\mathit{SCE}}(\cost)$ or $\predcost \in \m{Y}_{\mathit{SPO}^{+}}(\cost)$, we define the perturbation threshold as the norm of the smallest perturbation $\bm{\Delta}$ such that 
    $\solution(\cost)$ is no longer an optimal solution to $\predcost+\bm{\Delta}$. Formally,
\begin{align}
    \Gamma(\predcost) \coloneqq 
    & \min_{\decisionvar' \in \feas\; ,\bm{\Delta}} \quad  \|\bm{\Delta}\|_2 \\
    & \text{s.t.} 
    \quad 
     (\predcost + \bm{\Delta})^\top \solution > (\predcost + \bm{\Delta})^\top \decisionvar'
    \nonumber
\end{align}
    Now let,
    \begin{equation}
\Gamma_{\mathit{SCE}} \coloneqq \min_{\predcost \in \m{Y}_{\mathit{SCE}}(\cost)} \Gamma(\predcost), \quad 
\Gamma_{\mathit{SPO}^{+}} \coloneqq \min_{\predcost \in \m{Y}_{\mathit{SPO}^{+}}(\cost)} \Gamma(\predcost)
\end{equation}
    Then $\Gamma_{ \mathit{SPO}^{+} } \geq \Gamma_{\mathit{SCE}}$.
\end{theorem}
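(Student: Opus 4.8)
The plan is to reduce the whole comparison to a single concavity property of $\Gamma$. First I would obtain a closed-form variational description of $\Gamma(\predcost)$. Fixing a feasible $\decisionvar'$ in the inner problem, the subproblem ``minimize $\|\bm{\Delta}\|_2$ subject to the single linear constraint $(\predcost+\bm{\Delta})^\top(\solution - \decisionvar') > 0$'' is an elementary projection onto a halfspace. By Theorem~\ref{label:pretheorem}, every $\predcost \in \m{Y}_{\mathit{SCE}}(\cost) \cup \m{Y}_{\mathit{SPO}^{+}}(\cost)$ keeps $\solution(\cost) = \solution$ optimal, so $\predcost^\top(\decisionvar' - \solution) \ge 0$, and the minimal perturbation has norm $\predcost^\top(\decisionvar' - \solution)/\|\decisionvar' - \solution\|$. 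Minimizing over $\decisionvar'$ gives
\[
\Gamma(\predcost) = \min_{\decisionvar' \in \feas,\, \decisionvar' \neq \solution} \frac{\predcost^\top(\decisionvar' - \solution)}{\|\decisionvar' - \solution\|}.
\]
The key structural consequence, and the step I expect to be the crux, is that $\Gamma$ is a pointwise minimum of functions that are \emph{linear} in $\predcost$, hence $\Gamma$ is \emph{concave} (and positively homogeneous). Everything after this is bookkeeping once the distance-to-the-optimality-cone is written in this linear-in-$\predcost$ form.

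Next I would put the two zero-gradient sets over a common index. Since $\nabla_{\SCE} = \solution - \solution(\predcost)$ and $\nabla_{\spoplus} = 2\big(\solution - \solution(2\predcost - \cost)\big)$, the uniqueness assumption gives $\predcost \in \m{Y}_{\mathit{SCE}}(\cost)$ iff $\solution(\predcost) = \solution$, while $\predcost \in \m{Y}_{\mathit{SPO}^{+}}(\cost)$ iff $2\predcost - \cost \in \m{Y}_{\mathit{SCE}}(\cost)$. Hence the affine bijection $\vq \mapsto \tfrac{\vq + \cost}{2}$ maps $\m{Y}_{\mathit{SCE}}(\cost)$ onto $\m{Y}_{\mathit{SPO}^{+}}(\cost)$, so that $\Gamma_{\mathit{SPO}^{+}} = \min_{\vq \in \m{Y}_{\mathit{SCE}}(\cost)} \Gamma\big(\tfrac{\vq + \cost}{2}\big)$.

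Finally I would combine concavity with the observation that $\cost$ itself lies in $\m{Y}_{\mathit{SCE}}(\cost)$ (trivially $\solution(\cost) = \solution$). For any $\vq \in \m{Y}_{\mathit{SCE}}(\cost)$, midpoint concavity yields $\Gamma\big(\tfrac{\vq + \cost}{2}\big) \ge \tfrac12\Gamma(\vq) + \tfrac12\Gamma(\cost)$. Taking the minimum over $\vq$ gives $\Gamma_{\mathit{SPO}^{+}} \ge \tfrac12 \Gamma_{\mathit{SCE}} + \tfrac12 \Gamma(\cost)$, and since $\cost \in \m{Y}_{\mathit{SCE}}(\cost)$ we have $\Gamma(\cost) \ge \Gamma_{\mathit{SCE}}$; substituting gives $\Gamma_{\mathit{SPO}^{+}} \ge \Gamma_{\mathit{SCE}}$, as claimed. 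The only ingredients I lean on are the uniqueness of $\solution(\cost)$ (so the zero-sets are exactly the preimages above) and Theorem~\ref{label:pretheorem} (so each relevant $\predcost$ keeps $\solution$ optimal, making the halfspace distances the nonnegative signed quantities used in the formula for $\Gamma$).
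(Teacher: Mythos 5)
Your proof is correct, but it takes a genuinely different route from the paper's. The paper first establishes (Corollary~\ref{corollary3}, via the halfspace characterizations derived in the proof of Theorem~\ref{label:pretheorem}) that $\m{Y}_{\mathit{SPO}^{+}}(\cost) \subseteq \m{Y}_{\mathit{SCE}}(\cost)$, and the inequality then follows because a minimum taken over a subset can only be larger; the paper phrases this through the sets of admissible perturbations $\{\bm{\Delta}\}$, but the content is monotonicity of $\min$ under set inclusion. You instead derive the closed form $\Gamma(\predcost) = \inf_{\decisionvar' \in \feas,\, \decisionvar' \neq \solution}\, \predcost^\top(\decisionvar' - \solution)/\lVert \decisionvar' - \solution \rVert$, observe that it is concave as an infimum of functions linear in $\predcost$, identify $\m{Y}_{\mathit{SPO}^{+}}(\cost)$ as the image of $\m{Y}_{\mathit{SCE}}(\cost)$ under $\vq \mapsto (\vq+\cost)/2$, and apply midpoint concavity together with $\cost \in \m{Y}_{\mathit{SCE}}(\cost)$. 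This is more machinery than the paper needs --- your bijection plus the convexity of the cone $\m{Y}_{\mathit{SCE}}(\cost)$ already re-derives the containment, after which the paper's one-line argument finishes --- but it buys a strictly stronger conclusion, $\Gamma_{\mathit{SPO}^{+}} \geq \tfrac{1}{2}\big(\Gamma_{\mathit{SCE}} + \Gamma(\cost)\big)$, which quantifies the robustness gap rather than merely ordering the two thresholds. One point worth stating explicitly: the closed form for $\Gamma$ with a signed numerator is valid only where $\solution$ remains optimal for $\predcost$ (otherwise the infimal perturbation is $0$, not a negative quantity); your argument is safe because all three points entering the concavity inequality lie in $\m{Y}_{\mathit{SCE}}(\cost)$, but that is the step where the restriction to the zero-gradient sets is actually doing work.
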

While the proof is provided in the Appendix \ref{proof_theorem1}, we provide intuition for Theorem \ref{theorem:1} with a simple example. Suppose that we have to select the most valuable of two items, with true values 5 and 10.
Now, $\nabla_{ \spoplus } = \solution(2\predcost - \cost)$ becomes zero when the predicted value of the second item exceeds that of the first by at least half the difference in their true values, i.e., $(10 - 5)/2 = 2.5$.
In contrast, $\nabla_{\m{L}_{SCE}}$ becomes zero as soon as the predicted value of the second item slightly exceeds that of the first, e.g., if the corresponding predictions are 8 and 8.01. 
With such predictions, $\nabla_{\m{L}_{SCE}}$ is already zero, and there is no gradient signal to make the difference any larger. 
In this way, minimizing $\SCE$ with a non-differentiable solver can leave $\predcost$ stuck near such \emph{boundaries}, where slight perturbations yield a solution different from the true optimal.

\subsection{Surrogate Losses with Differentiable Optimization}
\label{sect:diffentiablesolversurrogate}
To use a differentiable solver, if the problem is an MILP,  it is first relaxed to an LP.
Moreover, the LP is non-smooth, as small changes in the cost parameter either leave the solution unchanged or cause abrupt shifts.
QP smoothing converts the non-smooth LP into a smooth, differentiable QP, allowing regret to be computed and differentiated by solving the smoothed QP using a differentiable optimization layer.
Existing DFL approaches under this category minimize the empirical regret of the smoothed problem, assuming that this reduces the expected regret on unseen instances.
However, through a close inspection of how the incorporation of smoothing changes the gradient landscape, we reveal a shortcoming in this approach.

The introduction of smoothing ensures that the solution transitions smoothly, rather than abruptly, near the original LP's transition points. However, the solution of the smoothed QP remains unchanged, or changes very slowly, in regions that are not near the transition points, 
provided that the smoothing strength is kept low as illustrated in Figure~\ref{fig:QP} in Appendix~\ref{Appendix:dflindetail}.
So, in this region, $\frac{d \solution (\predcost)}{d \predcost}$ 
is nearly zero.
When regret is minimized, its derivative with respect to $\predcost$ takes the following form:
\begin{align}
\label{eq:grad_regret}
     \frac{\partial \solution (\cost) }{\partial \cost}\Bigr|_{\substack{\cost=\predcost}}  \cost 
\end{align}
\noindent where $\frac{\partial \solution (\cost) }{\partial \cost}\Bigr|_{\substack{\cost=\predcost}}$ is computed by considering the smoothed optimization problem.
Similarly, if $\mathit{SqDE}$ were considered as the training loss, the derivative would be:
\begin{align}
\label{eq:squared_grad_regret}
     \frac{\partial \solution (\cost) }{\partial \cost}\Bigr|_{\substack{\cost=\predcost}}  (  \solution(\predcost) - \solution )  
\end{align}
As we illustrated above, smoothing addresses the non-differentiability at the transition points, but the derivative $\frac{d \solution (\predcost)}{d \predcost}$ still remains zero far from these points.
Hence, in both Eq.~\ref{eq:grad_regret} and Eq.~\ref{eq:squared_grad_regret}, the derivative remains zero across large regions of the parameter space, due to $\frac{\partial \solution (\cost) }{\partial \cost}\Bigr|_{\substack{\cost=\predcost}}$ becoming zero.
Consequently, training by gradient descent would fail to change  $\predcost$
despite $\predcost$ resulting in non-zero regret.

To prevent the derivative from \emph{vanishing far from the transition points, in this paper, we argue in favour of minimizing a surrogate loss even when $\mathit{SqDE}$ or $\regret$ can be directly minimized using a differentiable solver}.
For instance, when $\spoplus$ is minimized, the derivative of it after smoothing with respect to $\predcost$ would be:
\begin{align}
\label{eq:SPO_fullgrad}
    2(\solution - \solution(2\predcost - \cost) ) + 2 \frac{\partial \solution (\cost) }{\partial \cost}\Bigr|_{\substack{\cost=2\predcost - \cost}} ( \cost - 2\predcost) 
\end{align}
Similarly if $\SCE$ is minimized after smoothing, the resulting derivative would be:
\begin{align}
\label{eq:SCE_fullgrad}
    (\solution - \solution(\predcost)) + \frac{\partial \solution (\cost) }{\partial \cost}\Bigr|_{\substack{\cost=\predcost}} ( \cost - \predcost) 
\end{align}
\noindent
What sets Eq.\ref{eq:SPO_fullgrad} and Eq.\ref{eq:SCE_fullgrad} apart from Eq.~\ref{eq:grad_regret} is the term $(\solution - \solution(\predcost))$.
The term $(\solution - \solution(\predcost))$ prevents $\frac{d \m{L}}{d \predcost}$ going to zero even when $\frac{d \solution (\predcost)}{d \predcost} \approx 0$. 
%
\subsection{Numerical Illustrations}\label{sect:illustration}
To illustrate that zero-gradient issue persists even after QP smoothing, we will demonstrate how the gradient landscape changes after QP smoothing with a simple illustration.
For this, we consider the following one-dimensional optimization problem:
\begin{align}
\label{eq:relu_illustrate}
    \min_{w} y w & \; \; \text{s.t.}\;  0 \leq w \leq 1 \;  
\end{align}
where $y \in \mathbb{R}$ is the parameter to be predicted. 
Note that the solution of this problem is: $w^\star (y) = 1$ if $y < 0$ and $w^\star (y) = 0$ if $y > 0$; and $y=0$ for any value in the interval [0,1] is an optimal solution.
Let us assume that the true value of $y$ is $4$ and hence $w^\star (y) = 0$.
The \textcolor{red}{red} line in Figure~\ref{fig:smoothing}
shows how regret changes with predicted $\hat{y}$:
$4$ when $\hat{y}\leq0$ and $0$ when $\hat{y}>0$.  
The regret changes abruptly at $\hat{y}=0$.

\begin{figure}
\centering
    \includegraphics[scale=0.25]{./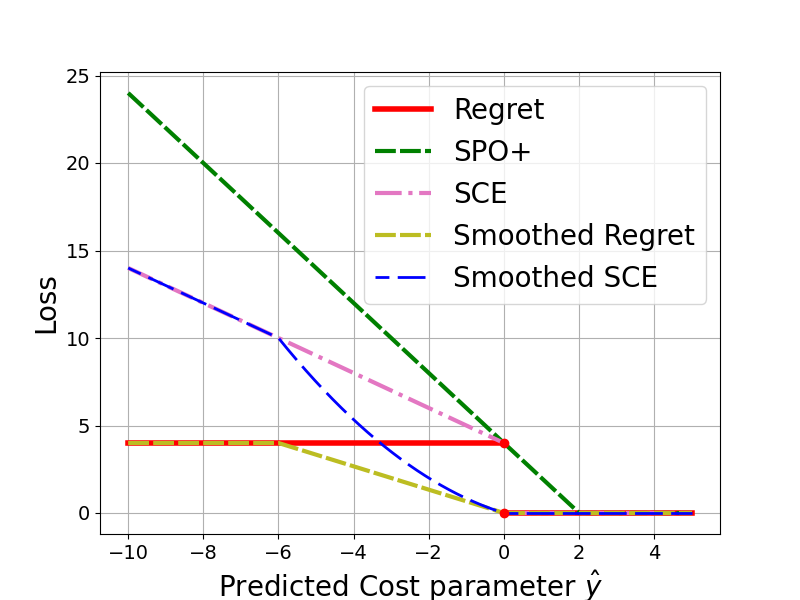}
    \vspace{0.5em}
  \caption{The numerical illustration shows that while smoothing causes solution plateaus with zero gradient. In contrast, $\SCE$ (with or without smoothing) ensures a non-zero gradient when regret is non-zero.}
  \vspace{1.5em}
  \label{fig:smoothing}
\end{figure}

The solution after augmenting the objective with  the quadratic smoothing term $\frac{\mu}{2} w^2$ with $\mu > 0$ is:
\begin{equation*}
w^\star (y) = 
\begin{cases}
0; & \text{when } y > 0\\
-\frac{y}{\mu}; & \text{when } -\mu < y \leq 0 \\
1; & \text{when } y \leq -\mu
\end{cases}
\end{equation*}
The regret with the smoothed QP is shown by the \textcolor{olive}{yellow} line in Figure~\ref{fig:smoothing} for $\mu = 6$.
The smoothing makes the derivative non-zero in the interval $-\mu \leq y \leq 0$, but zero when $y < -\mu$.
Note if $\hat{y} < - \mu$, \textit{the derivative of regret is $0$, even if regret is non-zero.}
Hence, the predictions cannot be changed by gradient descent despite regret being zero.
The smoothing strength can be increased by setting $\mu$ to a high value.
However, if $\mu \gg|y|$, $w^\star (y) \approx 0$ almost everywhere.

We plot $\m{L}_{SCE}$ with and without smoothing with \textcolor{blue}{blue} and 
\textcolor{violet}{violet} colors, respectively. In both cases, $\m{L}_{SCE}$ is strictly decreasing for $\hat{y} <0$ ensuring a non-zero derivative for $\hat{y} < 0$ and guiding $\hat{y}$ towards the positive half-space.
The fact that minimizing regret leads to zero gradients, limiting gradient-based learning, is further evidenced on a 2-dimensional LP in Appendix \ref{Appendix_simu}.
\begin{figure}[t]
    \centering
\includegraphics[width=0.9\linewidth]{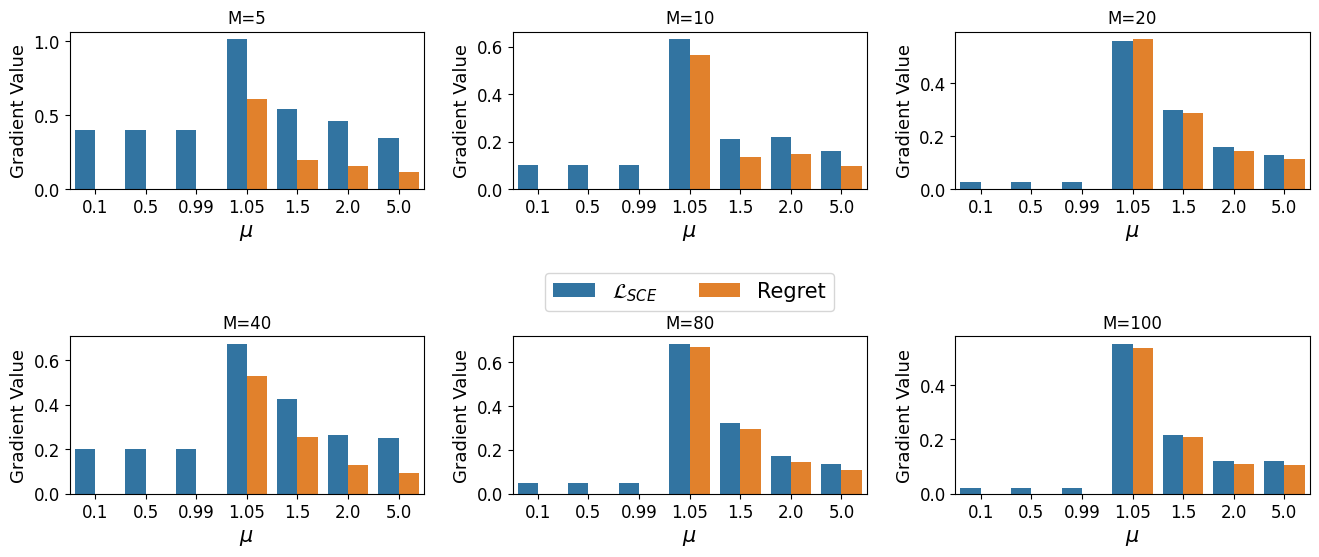}
    \caption{Results of computational simulation.}
    \vspace{1em}
    \label{fig:SimulationGradient_maintext}
    \vspace{1em}
\end{figure}

\subsection{Impact of Differentiable Optimization on $\SCE$} 
In Section \ref{sect:surrogate_without}, we argue that minimizing $\spoplus$ provides better robustness than minimizing $\SCE$ \emph{with a non-differentiable solver}. Recall that this is due to the fact that there is not much gradient signal to move away from the \textit{boundary} when $\SCE$ is minimized without differentiable optimization.
However, minimizing with a differentiable solver can prevent this.
In this case, if $\predcost$ lies in such \emph{boundary}, the term, $\frac{\partial \solution (\cost) }{\partial \cost}\Bigr|_{\substack{\cost=\predcost}} ( \cost - \predcost) $ will be non-zero and this part of the gradient will push $\predcost$ away from the \emph{boundary}. 
On the other hand, if $\predcost$ is far from the \emph{boundary} and still
$ \solution(\predcost) \neq \solution$, the term, $\frac{\partial \solution (\cost) }{\partial \cost}\Bigr|_{\substack{\cost=\predcost}} \approx 0$, but  the first term of the gradient will drive $\predcost$ to make  $ \solution(\predcost) = \solution$.
In principle, both $\SCE$ or $\spoplus$ can be minimized using a differentiable solver. 
However, we want to highlight the following properties of $\SCE$ in Proposition \ref{prop:1}.
\begin{proposition}
\label{prop:1}
For all, $\cost, \predcost \in \mathbb{R}^K$, the following holds:
   \begin{enumerate}
       \item $\SCE (\solution (\predcost), \cost) \geq 0$
       \item When the set of optimal solutions is a singleton\\ $\SCE (\solution (\predcost), \cost) = 0 \Leftrightarrow \regret (\solution (\predcost), \cost) = 0$.
   \end{enumerate}
   (A proof is provided in Appendix \ref{proof_prop1}. We also show generalization bounds for $\SCE$ loss in Appendix \ref{generalization_bounds}. )
\end{proposition}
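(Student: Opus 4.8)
The plan is to split $\SCE$ into two terms that are each nonnegative by an optimality argument, and then read off both claims directly from that decomposition. First I would regroup the loss as
\[
\SCE(\solution(\predcost), \cost) = \predcost^\top\bigl(\solution - \solution(\predcost)\bigr) + \cost^\top\bigl(\solution(\predcost) - \solution\bigr),
\]
which is merely a rearrangement of the definition $(\predcost-\cost)^\top\solution - (\predcost-\cost)^\top\solution(\predcost)$, collecting the $\predcost$ and $\cost$ contributions separately.

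For the first summand, since $\solution(\predcost)$ minimizes $\predcost^\top\decisionvar$ over $\feas$ and $\solution\in\feas$, we have $\predcost^\top\solution(\predcost) \leq \predcost^\top\solution$, so $\predcost^\top(\solution - \solution(\predcost)) \geq 0$. For the second summand, since $\solution$ minimizes $\cost^\top\decisionvar$ over $\feas$ and $\solution(\predcost)\in\feas$, we have $\cost^\top\solution \leq \cost^\top\solution(\predcost)$; moreover this term is by definition exactly $\regret(\solution(\predcost),\cost)$. Hence $\SCE$ is a sum of two nonnegative quantities, which establishes part~1.

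For part~2, the forward direction needs no uniqueness: if $\SCE = 0$, then both nonnegative summands must vanish, and the second of them is precisely $\regret(\solution(\predcost),\cost)$, so regret is zero. For the reverse direction I would invoke the singleton hypothesis. If $\regret(\solution(\predcost),\cost)=0$, then $\cost^\top\solution(\predcost)=\cost^\top\solution$, so $\solution(\predcost)$ itself attains the optimal objective value of (\ref{eq:standard_LP}) under $\cost$ and is therefore an optimal solution; uniqueness then forces $\solution(\predcost)=\solution$, which makes the first summand $\predcost^\top(\solution - \solution(\predcost))$ vanish, so $\SCE = 0$.

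There is no deep obstacle here; the only point requiring care is the reverse implication, where the singleton assumption is essential to promote ``$\regret=0$'' (which on its own only says $\solution(\predcost)$ attains the optimal value under $\cost$) to the stronger statement $\solution(\predcost)=\solution$. Without uniqueness, $\solution(\predcost)$ could be a \emph{different} optimal vertex, leaving $\predcost^\top(\solution-\solution(\predcost))$ strictly positive and breaking the equivalence, which is exactly why the hypothesis is stated.
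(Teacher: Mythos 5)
Your proof is correct and follows essentially the same route as the paper's: the identical decomposition of $\SCE$ into $\predcost^\top(\solution - \solution(\predcost)) + \cost^\top(\solution(\predcost) - \solution)$, with each summand nonnegative by optimality, and the same uniqueness argument for the reverse implication of part~2. The one small refinement is your forward direction of part~2, which reads regret off directly as the second vanishing summand (correctly observing that this direction needs no singleton hypothesis), whereas the paper argues it by contradiction and invokes uniqueness there as well.
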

\noindent  Proposition 1 shows that $\SCE$ is zero whenever $\regret$ is zero. However, $\spoplus$ is an upper-bound of $\regret$ and can be non-zero, even when $\regret$ is zero. We can see this in Figure~\ref{fig:smoothing}.
Another consideration is that $\spoplus$ is a convex loss even without smoothing, whereas $\SCE$ is not, as it has discontinuity. However, it becomes a convex loss after smoothing.
We will empirically investigate which one, after smoothing, is more useful for gradient-based DFL.
 
 

Furthermore, in Section \ref{sect:surrogate_without}, we showed that $\m{L}_{SCE}^{\hat{y}}$ and $\SCE^{(\hat{y} - y)}$ produces the same gradient with a non-differentiable solver. 
In contrast, with a differentiable solver, the gradient of $\m{L}_{SCE}^{\hat{y}}$ differs from that of $\SCE^{(\hat{y} - y)}$, as shown below:
$$
(\solution - \solution(\predcost)) - \frac{\partial \solution (\cost) }{\partial \cost}\Bigr|_{\substack{\cost=\predcost}} (  \predcost)$$
Also, note that if $\SCE$ or $\spoplus$ is minimized with a non-differentiable solver, $\frac{\partial \solution (\cost) }{\partial \cost}$ cannot be computed, so Eq.\ref{eq:SPO_fullgrad} and Eq.\ref{eq:SCE_fullgrad} reduce to SPO+ and SCE subgradients, respectively.


\section{Simulation-Based Gradient Analysis on LPs}
Through simulations on synthetically generated data, in this section, we will further demonstrate
that the zero-gradient issue exists even for quadratically regularized LPs larger than one dimension. 
For the simulations, we consider Top-1 selection problems with different numbers of items $M$.
This can be represented as the following LP:
\begin{equation}
    \label{eq:TopkSimu}
     \max_{\decisionvar \in \{0,1\} } \cost^\top \decisionvar \;\;\text{s.t.}\; \decisionvar^\top \mathbf{1}\leq 1
\end{equation}
\noindent Here, $\cost =[y_1, \ldots, y_M]  \in \mathbb{R}^M$ is the vector denoting value of all the items and $\decisionvar = [w_1, \ldots, w_M]$ is the vector of decision variables.
To replicate the setup of a PtO problem, we solve the optimization problem with $\predcost$, compute $\regret$ and $\SCE$, and then analyze the corresponding gradients.
To generate the ground truth $\cost$, we randomly select $M$ integers without replacement from the set $\{1, \dots, M\}$. The predicted costs, $\predcost$, are generated by considering a different sample from the same set. As a result, $\cost$ and $\predcost$ contain the same numbers but in different permutations.
We solve the LP with $\predcost$, after adding the quadratic regularizer $\mu$, using \textsl{Cvxpylayers}.

We compute the gradients of $\regret$ and $\SCE$ for multiple values of $M$ and $\mu$.
For each configuration of $M$ and $\mu$, we run 20 simulations.
and show the average absolute values of the gradients of the two losses -- $\SCE$ and $\regret$ --  in Figure \ref{fig:SimulationGradient_maintext}. We also show the average Manhattan distance between solutions of the true LP and `smoothed' QP for same $\predcost$ in Table \ref{tab:smilate_table_maintext}.
As we hypothesized, the gradient turns zero whenever $\regret$ is minimized with $\mu <1$. This is not the case when $\SCE$ is minimized.
We refer readers to Appendix~\ref{Appendix_simu} for a detailed description of the simulation setup and analysis.
\begin{table}[]
    \centering
     \caption{We tabulate average Manhattan distance between the the LP and the smoothed QP solutions for different values of $M$ and $\mu$.}
\begin{tabular}{lrrrrrrrr}
\toprule
& \multicolumn{6}{c}{M}\\  
\cmidrule{2-7}
$\mu$ & 5 & 10 & 20 & 40 & 80 & 100   \\
\midrule
0.100& 0.000& 0.000& 0.000& 0.000& 0.000& 0.000\\
0.500& 0.000& 0.000& 0.000& 0.000& 0.000& 0.000\\
0.990& 0.000& 0.000& 0.000& 0.000& 0.000& 0.001\\
1.050& 0.089& 0.089& 0.089& 0.089& 0.089& 0.089\\
1.500& 0.465& 0.466& 0.465& 0.465& 0.465& 0.464\\
2.000& 0.622& 0.622& 0.622& 0.622& 0.622& 0.622\\
5.000& 1.165& 1.165& 1.165& 1.165& 1.165& 1.165\\
\bottomrule
\end{tabular}
   
    \label{tab:smilate_table_maintext}
\end{table}



\begin{figure*}[htb]
    \centering
    \begin{subfigure}{0.45\textwidth}
        \centering
        \includegraphics[width=\textwidth]{./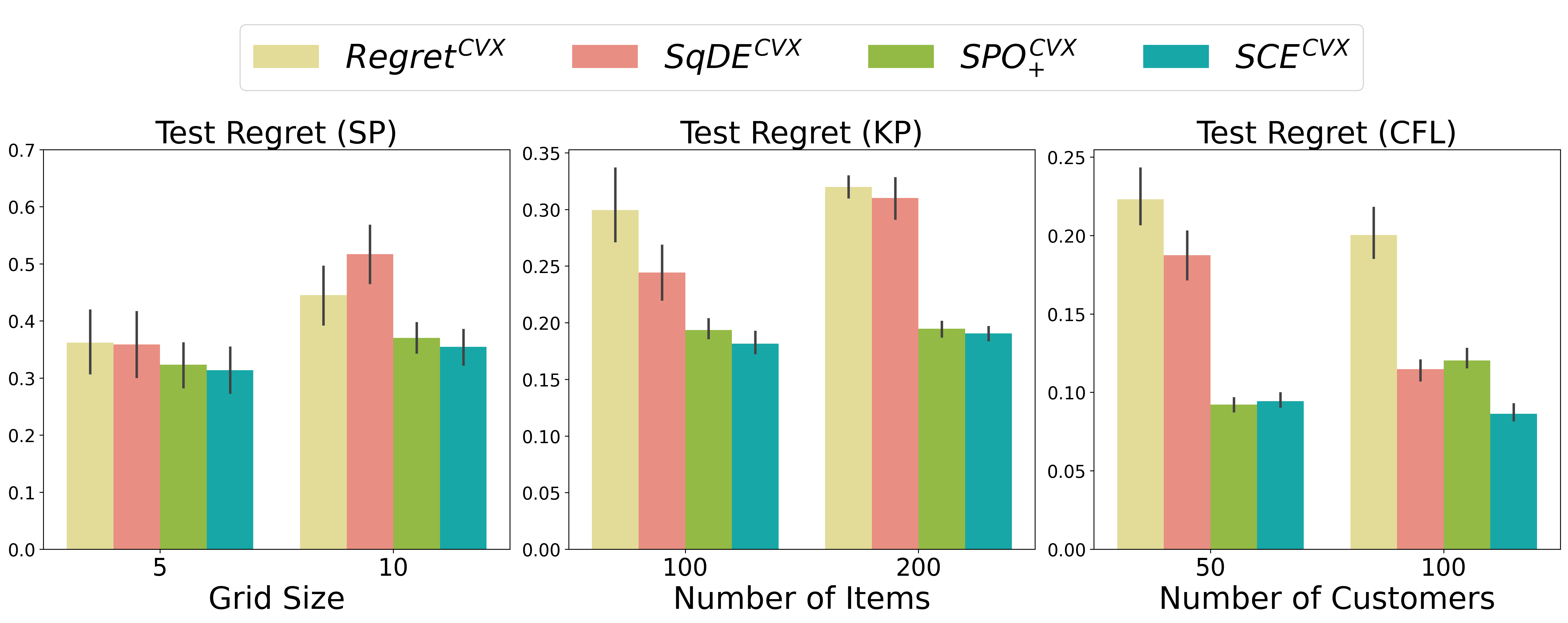}
        \caption{Cvxpylayers}
        \label{fig:cvx}
    \end{subfigure}
    \hfill
    \begin{subfigure}{0.45\textwidth}
        \centering
        \includegraphics[width=\textwidth]{./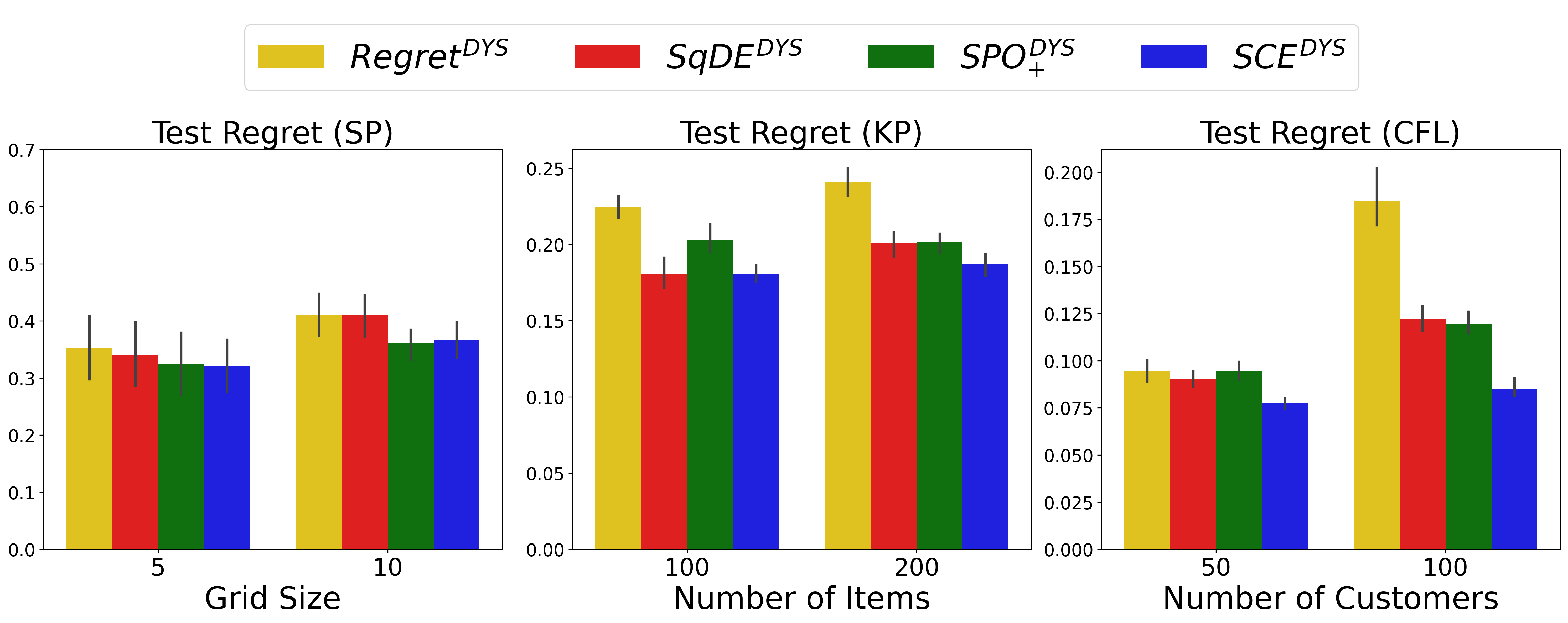}
        \caption{DYS-Net}
        \label{fig:dysnet}
    \end{subfigure}
    \vspace{1em} 
    \caption{Minimizing surrogate losses versus $\regret$ or $\mathit{SqDE}$ using differentiable optimization.}
    \vspace{1em}
    \label{fig:differtiableOpt}
\end{figure*}
\section{Experimental Evaluation}
We will consider the following three optimization problems in our experiments. We provide detailed descriptions and mathematical formulations in Appendix~\ref{appendix:ILP_formulation}.
See Appendix \ref{appendix:expsetup} for hyperparameters and tabulated results. The source code of our implementation is publicly available at \url{https://github.com/JayMan91/DYS-NET-SCE}.
\paragraph{Shortest path on a grid (SP).}
The goal of this optimization problem is to find the path with lowest cost on a $k \times k$ grid, starting from the southwest node and ending at the northeast node of the grid \citep{elmachtoub2022smart}.
A SP problem is characterized by the size of the grid, $k$.
The cost of each edge is unknown and should be predicted before solving the problem.  
This problem can be solved as an LP, due to total unimodularity~\citep{wolsey}. 
\paragraph{Multi-dimensional knapsack (KP).}
In this problem, a maximal value subset of items must be selected while respecting two-dimensional capacity constraints. The weights and capacities are known, and the item values should be predicted. Each KP instance is defined by the number of items. The problem is an ILP. 

\paragraph{Capacitated facility location (CFL).}
Given a set of feasible facility sites and a set of customers, the goal is to satisfy the customers' demands for a single product while minimizing the total cost. The total cost includes both the fixed costs of opening the facilities and the transportation costs. The fixed costs, customer demands, and facility capacities are assumed to be known. In the PtO version, the transportation costs are considered unknown. The problem is a MILP.

\subsection{Experimental Setup}
The training, validation and test instances for the SP, KP and CFL problems are generated using \textsl{PyEPO}~\citep{pyepo}.
In all problems considered, the true relationship between features $\feature$ and costs $\cost$ is non-linear, but we use linear models to predict $\cost$. This setup, common in PtO evaluations, highlights how DFL methods can still achieve low regret even when the predictive model is misspecified.
Appendix \ref{appendix:data} explains the true underlying relationship between the cost and the features. 
We use the polynomial degree parameter and the noise half-width parameter as 6 and 0.5, respectively, in all our experiments.
The predictive models are implemented using \textsl{PyTorch} \citep{paszke2019pytorch}, and \textsl{Gurobipy} \citep{gurobi} is used to implement the (MI)LPs.
For evaluation, we always compute the optimal solution both for true and predicted costs using the (MI)LP solvers.
%
For all the experiments, we report the average of \textit{normalized relative regret} on test instances, calculated as follows:
\begin{equation}
\label{eq:relative_regret}
\frac{1}{\mathit{N_{test}}} \sum_{i=1}^{ \mathit{N_{test}} } \frac{ \cost_i^\top (\solution (\predcost_i) - \solution_i) } {\cost_i^\top \solution_i  }.
\end{equation} 
\noindent where $\mathit{N_{test}}$ is the number of test instances. Furthermore, each experiment is repeated five times with different seeds, and the average is reported.
%
%
The experiments were executed on an \textsl{Intel i7-13800H} (20 cores) CPU with 32GB RAM.

\begin{figure*}[htb]
    \centering
    \begin{subfigure}{0.49\textwidth}
        \centering
        \includegraphics[width=\textwidth]{./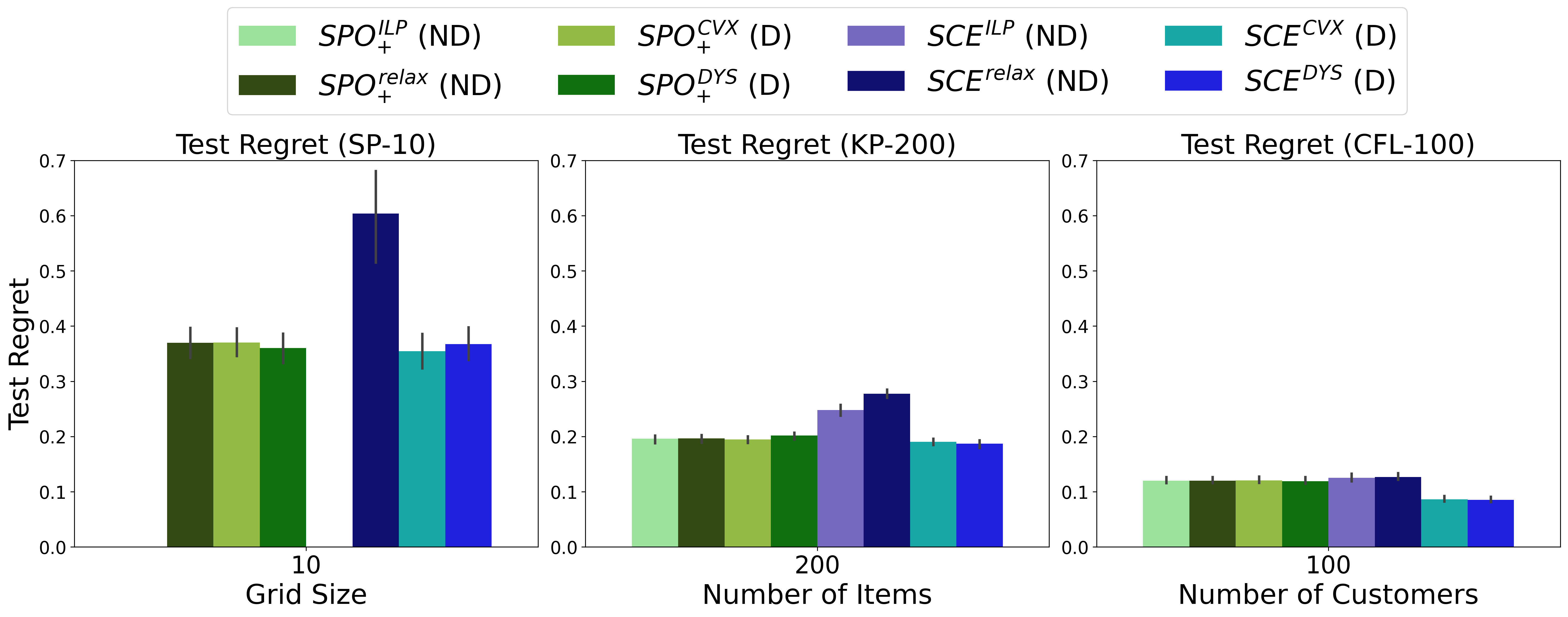}
        \caption{Test regret comparison when ILP and LP relaxation are solved by differentiable (D) and non-differentiable (ND) optimizer.}
        \label{fig:RQ2Regret}
    \end{subfigure}
    \begin{subfigure}{0.49\textwidth}
        \centering
        \includegraphics[width=\textwidth]{./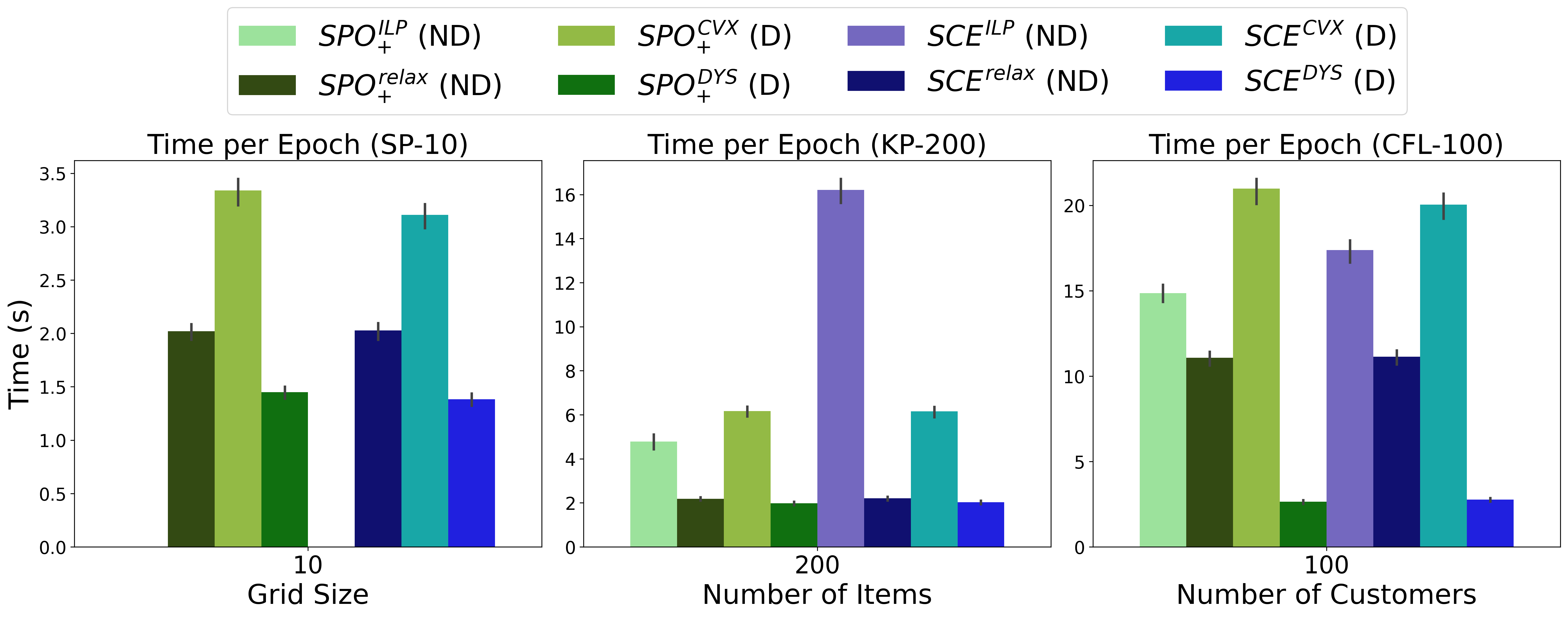}
        \caption{Runtime comparison when ILP and LP relaxation are solved by differentiable (D) and non-differentiable (ND) optimizer.}
        \label{fig:RQ2Runtime}
    \end{subfigure}
\vspace{1.5em} 
    \caption{Minimizing surrogate losses using differentiable and non-differentiable optimizer.}
    \vspace{2em}
    \label{fig:RQ2}
\end{figure*}
\subsection {Experimental Results}
\subsubsection{Minimizing Surrogate Losses vs. Decision Losses with Differentiable Solver}
In the first set of experiments, we investigate whether minimizing surrogate losses in the presence of differentiable optimization layers leads to lower regret compared to directly minimizing $\regret$ or $\mathit{SqDE}$.
We consider SP instances with gridsize 5 and 10; KP instances with 100 and 200 items, and CFL instances with 50 and 100 customers.
In Figure~\ref{fig:cvx}, $Regret^{CVX}$, $SqDE^{CVX}$, $SPO_{+}^{CVX}$, and $SCE^{CVX}$ display the average test regret after minimizing $\regret$, $\mathit{SqDE}$, $\spoplus$, and $\SCE$ using Cvxpylayers as a differentiable solver, respectively.
In Figure~\ref{fig:dysnet}, these losses are minimized using DYS-Net as the differentiable solver.
Figures \ref{fig:cvx} and \ref{fig:dysnet} show that minimizing $\SCE$ or $\spoplus$ consistently yields lower regret than minimizing $\regret$ or $\mathit{SqDE}$, both when using an exact differentiable solver (Cvxpylayers) and an approximate one (DYS-Net).
\subsubsection{Minimizing surrogate losses using different formulations.}
Previous experiments showed that, when using differentiable optimization, minimizing surrogate losses yields lower test regret than directly minimizing regret.
We now compare the quality and runtime of minimizing surrogate losses with different kinds of solvers.
We will minimize both $\spoplus$ and $\SCE$ using two non-differentiable solvers -- one solving the ILP formulation (for ILP problems) and one solving the LP relaxation; as well as two differentiable solvers: Cvxpylayers (an exact differentiable solver) and DYS-Net (an approximate, faster differentiable solver), which solve the `smoothed' QP.
In Figure \ref{fig:RQ2}, for SP, we do not consider the ILP formulation, as the LP formulation, itself, provides the exact solution. 

In Figure \ref{fig:RQ2Regret}, minimizing $\spoplus$ yields similar test regret regardless of the solver type. In contrast, minimizing $\SCE$ with a non-differentiable solver leads to significantly higher regret, consistent with Theorem \ref{theorem:1}, which states that $\spoplus$ is less sensitive to perturbations, while $\SCE$ can result in non-zero regret from slight changes.
As noted in Section \ref{sect:diffentiablesolversurrogate}, this issue with $\SCE$ is mitigated by using a differentiable solver. Figure \ref{fig:RQ2Regret} confirms this: minimizing $\SCE$ with a differentiable solver gives the lowest test regret, even lower compared to minimizing $\spoplus$, whether $\spoplus$ is minimized using a differentiable or non-differentiable solver.
Figure \ref{fig:RQ2Runtime} reports training times. DYS-Net is consistently faster.
Thus, minimizing $\SCE$ with DYS-Net achieves both low regret and low training time.
\begin{figure*}[h]
    \centering
    \begin{subfigure}{0.49\textwidth}
        \centering
        \includegraphics[width=\textwidth]{./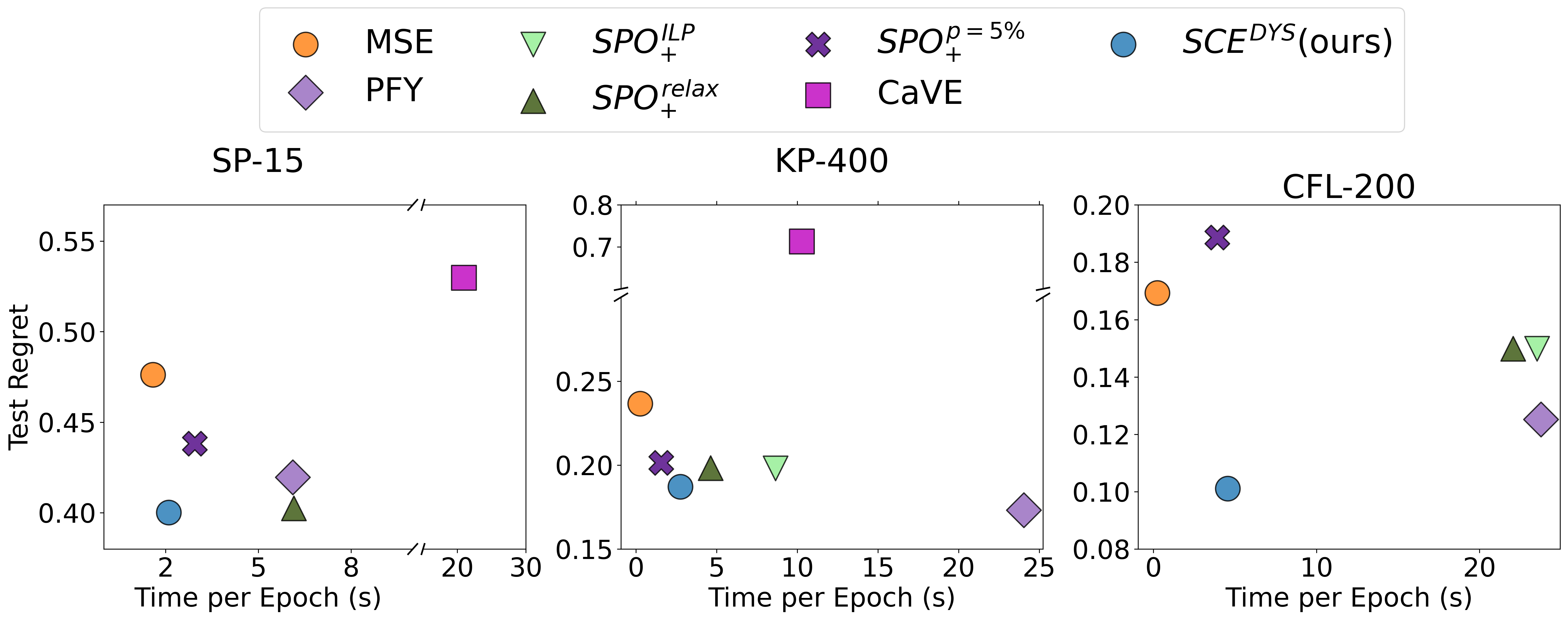}
        \caption{}
        \label{fig:sub1}
    \end{subfigure}
    \begin{subfigure}{0.49\textwidth}
        \centering
        \includegraphics[width=\textwidth]{./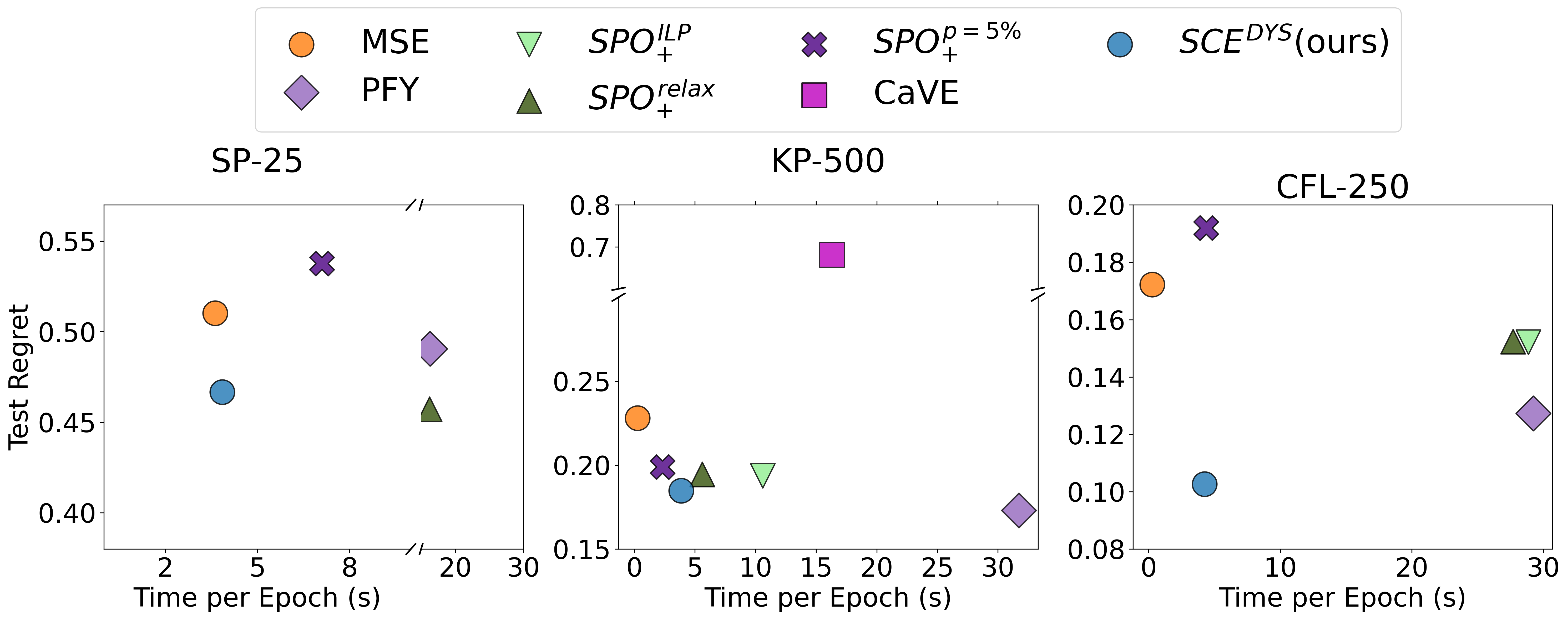}
        \caption{}
        \label{fig:sub2} 
    \end{subfigure}
    \vspace{1em}
    \caption{Comparison of Regret and Runtime against the state-of-the-art DFL techniques.}
    \vspace{1em}
    \label{fig:SOTA}
\end{figure*}

\subsubsection{Comparison against the state of the art.}
The previous experiments show that minimizing $\SCE$ with a differentiable solver achieves regret comparable to minimizing $\spoplus$ with a non-differentiable solver, one of the state-of-the-art DFL techniques. As DYS-Net is significantly faster than Cvxpylayers and the non-differentiable exact solvers, we propose minimizing $\SCE$ using DYS-Net ($SCE^{DYS}$) to speed up DFL training without compromising quality.
Hence in the final set of experiments, we compare the performance of $SCE^{DYS}$ against `Perturbed Fenchel-Young' (PFY) loss \citep{berthet2020learning}, the CaVE loss \citep{cave24},
%
$SPO_{+}^{relax}$ and $SPO_{+}^{ILP}$, which minimize the $\spoplus$ loss with and without LP relaxation, respectively, 
and the $SPO_{+}^{p=5\%}$ variant, which employs solution caching to solve (MI)LPs only 5\% of the time \citep{mulamba2020discrete}. 

We evaluate performance on larger problem instances: SP instances with gridsize 15 and 20; KP instances with 400 and 500 items, and CFL instances with 200 and 250 customers. In Figure~\ref{fig:SOTA}, we plot the training time per epoch (x-axis) against normalized test regret (y-axis) to assess both solution quality and training efficiency.
Minimizing MSE yields the lowest training time per epoch since no (MI)LPs are solved, but this often leads to higher test regret compared to DFL methods as it consider the downstream (MI)LPs while training

In the SP instances, $SPO_{+}^{relax}$ achieves the lowest regret. The regret of $SCE^{\text{DYS}}$ is second-lowest. Moreover, it requires only \emph{one-third} the training time of $SPO_{+}$.
$SPO_{+}^{p=5\%}$ which solves the LP only 5\% of times, is faster than $SPO_{+}$ but not as fast as $SCE^{\text{DYS}}$ and its regret is also higher than $SCE^{\text{DYS}}$.
For the KP instances, PFY achieves the lowest regret but it has the longest runtime. 
$SCE^{\text{DYS}}$ is ranked second in terms of test regret -- while being nearly five times faster than PFY. Although $SPO_{+}^{p=5\%}$ runs faster than $SCE^{\text{DYS}}$, its regret is slightly worse. $SCE^{\text{DYS}}$ achieves both the \emph{lowest runtime and the lowest regret} among the DFL techniques in the CFL instances. While $SPO_{+}^{p=5\%}$ has comparable runtime in these instances, its test regret is substantially higher than $SCE^{\text{DYS}}$.
CaVE operates on active constraints, which grow with problem size in SP and CFL. This makes CaVE memory-intensive, and we are unable to run it on all instances.

To summarize, $SCE^{\text{DYS}}$ consistently matches or outperforms state-of-the-art DFL techniques, such as $SPO_{+}$ and PFY, in terms of regret, and significantly reduces training time across all problem instances, often by a factor of three or more compared to these methods. These results demonstrate that $SCE^{\text{DYS}}$ achieves low regret with faster training times, suggesting that it is an efficient method for DFL.

\section{Conclusion}
In this paper, we study the technique of making an (MI)LP differentiable by adding a quadratic regularizer to its objective. This technique is used in DFL to compute the Jacobian of the (MI)LP solution with respect to the cost parameters, and train an ML model to minimize regret by backpropagating through the smoothed QP.
We first identify the zero-gradient issue exists even after smoothing for both small and large LPs through both numerical examples and simulations. To circumvent this issue, we propose minimizing a surrogate loss instead of the regret. We empirically show that minimizing the surrogate loss, $\SCE$, with a differentiable solver results in regret as low as or better than that of state-of-the-art surrogate-loss-based DFL techniques. Finally, by using DYS-Net, a fast approximate differentiable solver, to minimize $\SCE$, we demonstrate that DFL training time can be reduced by a factor of three without sacrificing decision regret.

A recent study~\citep{shallpass} shows that the zero-gradient issue also occurs in problems with non-linear objectives. In future work, we plan to explore surrogate losses for such settings. Another direction for future work is to study ILPs with weaker LP relaxations, such as the MTZ formulation of the TSP \citep{sherali2002tightening}, and examine how relying on the relaxation impacts decision quality when using DYS-Net. Future work could also explore alternative smoothing techniques or surrogate losses that could further improve the effectiveness of this approach.



\begin{ack}
This research received funding from the European Research Council (ERC) under the European Union’s Horizon 2020 research and innovation program (Grant No. 101002802, CHAT-Opt). Senne Berden
is a fellow of the Research Foundation-Flanders (FWO-Vlaanderen, 11PQ024N).
\end{ack}



\bibliography{mybibfile}

\begin{thebibliography}{35}
\providecommand{\natexlab}[1]{#1}
\providecommand{\url}[1]{\texttt{#1}}
\expandafter\ifx\csname urlstyle\endcsname\relax
  \providecommand{\doi}[1]{doi: #1}\else
  \providecommand{\doi}{doi: \begingroup \urlstyle{rm}\Url}\fi

\bibitem[Agrawal et~al.(2019)Agrawal, Amos, Barratt, Boyd, Diamond, and
  Kolter]{agrawal2019differentiable}
A.~Agrawal, B.~Amos, S.~Barratt, S.~Boyd, S.~Diamond, and J.~Z. Kolter.
\newblock Differentiable convex optimization layers.
\newblock \emph{Advances in neural information processing systems}, 32, 2019.

\bibitem[Alzubaidi et~al.(2021)Alzubaidi, Zhang, Humaidi, Al-Dujaili, Duan,
  Al-Shamma, Santamar{\'\i}a, Fadhel, Al-Amidie, and
  Farhan]{alzubaidi2021review}
L.~Alzubaidi, J.~Zhang, A.~J. Humaidi, A.~Al-Dujaili, Y.~Duan, O.~Al-Shamma,
  J.~Santamar{\'\i}a, M.~A. Fadhel, M.~Al-Amidie, and L.~Farhan.
\newblock Review of deep learning: concepts, cnn architectures, challenges,
  applications, future directions.
\newblock \emph{Journal of big Data}, 8:\penalty0 1--74, 2021.

\bibitem[Amos and Kolter(2017)]{amos2017optnet}
B.~Amos and J.~Z. Kolter.
\newblock Optnet: Differentiable optimization as a layer in neural networks.
\newblock In \emph{International Conference on Machine Learning}, pages
  136--145. PMLR, 2017.

\bibitem[Bartlett and Mendelson(2002)]{bartlett2002rademacher}
P.~L. Bartlett and S.~Mendelson.
\newblock Rademacher and gaussian complexities: Risk bounds and structural
  results.
\newblock \emph{Journal of Machine Learning Research}, 3\penalty0
  (Nov):\penalty0 463--482, 2002.

\bibitem[Berthet et~al.(2020)Berthet, Blondel, Teboul, Cuturi, Vert, and
  Bach]{berthet2020learning}
Q.~Berthet, M.~Blondel, O.~Teboul, M.~Cuturi, J.-P. Vert, and F.~Bach.
\newblock Learning with differentiable pertubed optimizers.
\newblock \emph{Advances in neural information processing systems},
  33:\penalty0 9508--9519, 2020.

\bibitem[Blondel et~al.(2020)Blondel, Martins, and Niculae]{FY2020}
M.~Blondel, A.~F. Martins, and V.~Niculae.
\newblock Learning with fenchel-young losses.
\newblock \emph{J. Mach. Learn. Res.}, 21\penalty0 (35):\penalty0 1--69, 2020.

\bibitem[Boyd and Vandenberghe(2004)]{boyd2004convex}
S.~Boyd and L.~Vandenberghe.
\newblock \emph{Convex optimization}.
\newblock Cambridge university press, 2004.

\bibitem[Cristian et~al.(2023)Cristian, Harsha, Perakis, Quanz, and
  Spantidakis]{cristian2023end}
R.~Cristian, P.~Harsha, G.~Perakis, B.~L. Quanz, and I.~Spantidakis.
\newblock End-to-end learning for optimization via constraint-enforcing
  approximators.
\newblock In \emph{Proceedings of the AAAI Conference on Artificial
  Intelligence}, volume~37, pages 7253--7260, 2023.

\bibitem[Davis and Yin(2017)]{davis2017three}
D.~Davis and W.~Yin.
\newblock A three-operator splitting scheme and its optimization applications.
\newblock \emph{Set-valued and variational analysis}, 25:\penalty0 829--858,
  2017.

\bibitem[Duchi et~al.(2008)Duchi, Shalev-Shwartz, Singer, and Chandra]{Duchi}
J.~Duchi, S.~Shalev-Shwartz, Y.~Singer, and T.~Chandra.
\newblock Efficient projections onto the l1-ball for learning in high
  dimensions.
\newblock In \emph{Proceedings of the 25th International Conference on Machine
  Learning}, ICML '08, page 272–279, New York, NY, USA, 2008. Association for
  Computing Machinery.
\newblock ISBN 9781605582054.
\newblock \doi{10.1145/1390156.1390191}.

\bibitem[El~Balghiti et~al.(2019)El~Balghiti, Elmachtoub, Grigas, and
  Tewari]{el2019generalization}
O.~El~Balghiti, A.~N. Elmachtoub, P.~Grigas, and A.~Tewari.
\newblock Generalization bounds in the predict-then-optimize framework.
\newblock \emph{Advances in neural information processing systems}, 32, 2019.

\bibitem[Elmachtoub and Grigas(2022)]{elmachtoub2022smart}
A.~N. Elmachtoub and P.~Grigas.
\newblock Smart “predict, then optimize”.
\newblock \emph{Management Science}, 68\penalty0 (1):\penalty0 9--26, 2022.

\bibitem[Ferber et~al.(2020)Ferber, Wilder, Dilkina, and Tambe]{Mipaal}
A.~Ferber, B.~Wilder, B.~Dilkina, and M.~Tambe.
\newblock Mipaal: Mixed integer program as a layer.
\newblock \emph{Proceedings of the AAAI Conference on Artificial Intelligence},
  34\penalty0 (02):\penalty0 1504--1511, Apr. 2020.

\bibitem[Ferber et~al.(2023)Ferber, Griffin, Dilkina, Keskin, and
  Gore]{ferber2023predicting}
A.~Ferber, E.~Griffin, B.~Dilkina, B.~Keskin, and M.~Gore.
\newblock Predicting wildlife trafficking routes with differentiable shortest
  paths.
\newblock In \emph{Proceedings of the Integration of Constraint Programming,
  Artificial Intelligence, and Operations Research: 20th International
  Conference, CPAIOR 2023}, 2023.

\bibitem[Fung et~al.(2022)Fung, Heaton, Li, McKenzie, Osher, and
  Yin]{fung2022jfb}
S.~W. Fung, H.~Heaton, Q.~Li, D.~McKenzie, S.~Osher, and W.~Yin.
\newblock Jfb: Jacobian-free backpropagation for implicit networks.
\newblock In \emph{Proceedings of the AAAI Conference on Artificial
  Intelligence}, volume~36, pages 6648--6656, 2022.

\bibitem[Gurobi~Optimization(2021)]{gurobi}
L.~Gurobi~Optimization.
\newblock Gurobi optimizer reference manual.
\newblock \url{http://www.gurobi.com}, 2021.

\bibitem[Gutmann and Hyv{\"a}rinen(2012)]{gutmannnoise}
M.~U. Gutmann and A.~Hyv{\"a}rinen.
\newblock Noise-contrastive estimation of unnormalized statistical models, with
  applications to natural image statistics.
\newblock \emph{The journal of machine learning research}, 13\penalty0
  (1):\penalty0 307--361, 2012.

\bibitem[Mandi and Guns(2020)]{MandiNEURIPS2020}
J.~Mandi and T.~Guns.
\newblock Interior point solving for lp-based prediction+optimisation.
\newblock In H.~Larochelle, M.~Ranzato, R.~Hadsell, M.~Balcan, and H.~Lin,
  editors, \emph{Advances in Neural Information Processing Systems}, volume~33,
  pages 7272--7282, 2020.

\bibitem[Mandi et~al.(2020)Mandi, Demirovi{\'c}, Stuckey, and
  Guns]{mandi2020smart}
J.~Mandi, E.~Demirovi{\'c}, P.~J. Stuckey, and T.~Guns.
\newblock Smart predict-and-optimize for hard combinatorial optimization
  problems.
\newblock \emph{Proceedings of the AAAI Conference on Artificial Intelligence},
  34\penalty0 (02):\penalty0 1603--1610, Apr. 2020.

\bibitem[Mandi et~al.(2022)Mandi, Bucarey, Tchomba, and Guns]{mandi22a}
J.~Mandi, V.~Bucarey, M.~M.~K. Tchomba, and T.~Guns.
\newblock Decision-focused learning: Through the lens of learning to rank.
\newblock In K.~Chaudhuri, S.~Jegelka, L.~Song, C.~Szepesvari, G.~Niu, and
  S.~Sabato, editors, \emph{Proceedings of the 39th International Conference on
  Machine Learning}, volume 162 of \emph{Proceedings of Machine Learning
  Research}, pages 14935--14947. PMLR, 17--23 Jul 2022.

\bibitem[Mandi et~al.(2024)Mandi, Kotary, Berden, Mulamba, Bucarey, Guns, and
  Fioretto]{mandi2023decision}
J.~Mandi, J.~Kotary, S.~Berden, M.~Mulamba, V.~Bucarey, T.~Guns, and
  F.~Fioretto.
\newblock Decision-focused learning: Foundations, state of the art, benchmark
  and future opportunities.
\newblock \emph{Journal of Artificial Intelligence Research}, 80:\penalty0
  1623--1701, 2024.

\bibitem[McKenzie et~al.(2024)McKenzie, Heaton, and Fung]{mckenzie2024learning}
D.~McKenzie, H.~Heaton, and S.~W. Fung.
\newblock Differentiating through integer linear programs with quadratic
  regularization and davis-yin splitting.
\newblock \emph{Transactions on Machine Learning Research}, 2024.
\newblock ISSN 2835-8856.
\newblock URL \url{https://openreview.net/forum?id=H8IaxrANWl}.

\bibitem[Mulamba et~al.(2021)Mulamba, Mandi, Diligenti, Lombardi, Bucarey, and
  Guns]{mulamba2020discrete}
M.~Mulamba, J.~Mandi, M.~Diligenti, M.~Lombardi, V.~Bucarey, and T.~Guns.
\newblock Contrastive losses and solution caching for predict-and-optimize.
\newblock In Z.-H. Zhou, editor, \emph{Proceedings of the Thirtieth
  International Joint Conference on Artificial Intelligence, {IJCAI-21}}, pages
  2833--2840. International Joint Conferences on Artificial Intelligence
  Organization, 8 2021.
\newblock \doi{10.24963/ijcai.2021/390}.
\newblock Main Track.

\bibitem[Niepert et~al.(2021)Niepert, Minervini, and Franceschi]{imle}
M.~Niepert, P.~Minervini, and L.~Franceschi.
\newblock Implicit mle: Backpropagating through discrete exponential family
  distributions.
\newblock In M.~Ranzato, A.~Beygelzimer, Y.~Dauphin, P.~Liang, and J.~W.
  Vaughan, editors, \emph{Advances in Neural Information Processing Systems},
  volume~34, pages 14567--14579. Curran Associates, Inc., 2021.

\bibitem[Paszke et~al.(2019)Paszke, Gross, Massa, Lerer, Bradbury, Chanan,
  Killeen, Lin, Gimelshein, Antiga, et~al.]{paszke2019pytorch}
A.~Paszke, S.~Gross, F.~Massa, A.~Lerer, J.~Bradbury, G.~Chanan, T.~Killeen,
  Z.~Lin, N.~Gimelshein, L.~Antiga, et~al.
\newblock Pytorch: An imperative style, high-performance deep learning library.
\newblock \emph{Advances in neural information processing systems}, 32, 2019.

\bibitem[Pogančić et~al.(2020)Pogančić, Paulus, Musil, Martius, and
  Rolinek]{DBB}
M.~V. Pogančić, A.~Paulus, V.~Musil, G.~Martius, and M.~Rolinek.
\newblock Differentiation of blackbox combinatorial solvers.
\newblock In \emph{International Conference on Learning Representations}, 2020.

\bibitem[Sadana et~al.(2025)Sadana, Chenreddy, Delage, Forel, Frejinger, and
  Vidal]{contextual2025}
U.~Sadana, A.~Chenreddy, E.~Delage, A.~Forel, E.~Frejinger, and T.~Vidal.
\newblock A survey of contextual optimization methods for decision-making under
  uncertainty.
\newblock \emph{European Journal of Operational Research}, 320\penalty0
  (2):\penalty0 271--289, 2025.
\newblock ISSN 0377-2217.
\newblock \doi{https://doi.org/10.1016/j.ejor.2024.03.020}.

\bibitem[Sahoo et~al.(2023)Sahoo, Paulus, Vlastelica, Musil, Kuleshov, and
  Martius]{NID}
S.~S. Sahoo, A.~Paulus, M.~Vlastelica, V.~Musil, V.~Kuleshov, and G.~Martius.
\newblock Backpropagation through combinatorial algorithms: Identity with
  projection works.
\newblock In \emph{The Eleventh International Conference on Learning
  Representations}, 2023.

\bibitem[Sherali and Driscoll(2002)]{sherali2002tightening}
H.~D. Sherali and P.~J. Driscoll.
\newblock On tightening the relaxations of miller-tucker-zemlin formulations
  for asymmetric traveling salesman problems.
\newblock \emph{Operations Research}, 50\penalty0 (4):\penalty0 656--669, 2002.

\bibitem[Tang and Khalil(2023)]{pyepo}
B.~Tang and E.~B. Khalil.
\newblock Pyepo: A pytorch-based end-to-end predict-then-optimize library for
  linear and integer programming, 2023.

\bibitem[Tang and Khalil(2024)]{cave24}
B.~Tang and E.~B. Khalil.
\newblock Cave: {A} cone-aligned approach for fast predict-then-optimize with
  binary linear programs.
\newblock In B.~Dilkina, editor, \emph{Integration of Constraint Programming,
  Artificial Intelligence, and Operations Research - 21st International
  Conference, {CPAIOR} 2024, Uppsala, Sweden, May 28-31, 2024, Proceedings,
  Part {II}}, volume 14743 of \emph{Lecture Notes in Computer Science}, pages
  193--210. Springer, 2024.

\bibitem[Vapnik(1991)]{vapnik1991}
V.~Vapnik.
\newblock Principles of risk minimization for learning theory.
\newblock \emph{Advances in neural information processing systems}, 4, 1991.

\bibitem[Veviurko et~al.(2024)Veviurko, Böhmer, and de~Weerdt]{shallpass}
G.~Veviurko, W.~Böhmer, and M.~de~Weerdt.
\newblock You shall pass: Dealing with the zero-gradient problem in predict and
  optimize for convex optimization, 2024.
\newblock URL \url{https://arxiv.org/abs/2307.16304}.

\bibitem[Wilder et~al.(2019)Wilder, Dilkina, and Tambe]{aaai/WilderDT19}
B.~Wilder, B.~Dilkina, and M.~Tambe.
\newblock Melding the data-decisions pipeline: Decision-focused learning for
  combinatorial optimization.
\newblock In \emph{The Thirty-Third {AAAI} Conference on Artificial
  Intelligence, {AAAI} 2019}, pages 1658--1665. {AAAI} Press, 2019.

\bibitem[Wolsey(2020)]{wolsey}
L.~A. Wolsey.
\newblock \emph{Integer Programming}.
\newblock John Wiley \& Sons, Ltd, 2020.
\newblock ISBN 9781119606475.

\end{thebibliography}
\newpage
\appendix
\onecolumn
\appendix
\section{Predict-then-Optimize Problem Description}
\label{appendix:pto_des}
We consider predicting parameters in the objective function of an LP. These kinds of problems can be framed as \textit{predict-then-optimize} (PtO) problems consisting of a prediction stage followed by an optimization stage, as illustrated in Figure \ref{fig:problem_desc}.
In the prediction stage, an ML model $\ML$ (with trainable parameters $\omega$) is used to predict unknown parameters using features, $\feature$, that are correlated to the parameter. During the optimization stage, the problem is solved with the predicted parameters. 
An offline dataset of past observations is available for training $\ML$.

It is important to distinguish datasets based on whether the true parameters, $\cost$, are observed and included in the dataset.  In some applications, the true parameters, $\cost$, may not be directly observable, and only the solutions, $\solution(\cost)$, are observed. While $\solution(\cost)$ can be computed if $\cost$ is known, the reverse is not true, since solving the inverse optimization problem is a separate research area.

Whether $\cost$ is observed or not is important because 
in order to compute $\regret$ (\eqref{eq:regret}), we need the true parameter $\cost$.
Most of the benchmarks in PtO problems assume that $\cost$ is observed in the past observation.
In this case the training data can be expressed as $\{ (\feature_i, \cost_i, \solution(\cost_i)) \}_{i=1}^N$ and the empirical regret, $\frac{1}{N}\sum_{i=1}^N \regret (\decisionvar^*(\ML(\feature_i)), \cost_i)$, can be computed.
In most PtO benchmark problems it is assumed that the true $\cost$ is observed in the training data \citep{mandi2023decision, pyepo}.
However, if the true cost $\cost$ is not observed in the training data, empirical regret cannot be computed. 
Instead, some other loss must be considered.  
For instance, \citet{mckenzie2024learning} consider
squared decision errors (SqDE) between $\solution (\cost)$ and $\solution (\predcost)$, i.e., $\mathit{SqDE} = || \solution (\cost) - \solution (\predcost) ||^2$.
\label{Appendix_problem}
\begin{figure}
    \centering
    \begin{tikzpicture}[
  font=\rmfamily\footnotesize,
  every matrix/.style={ampersand replacement=\&,column sep=2cm,row sep=.6cm},
  source/.style={draw,thick,rounded corners,inner sep=.3cm},
  process/.style={draw,thick,ellipse, minimum width=1pt,
    align=center},
  dots/.style={gray,scale=2},
  to/.style={->,>=stealth',shorten >=1pt,semithick,font=\rmfamily\scriptsize},
  every node/.style={align=center}, 
  arrow/.style = {thick,-stealth}]

    \node[source] (z) at (-3, 0) {$\feature$}; 
     \node[process](ML) at (0,0){$\ML$}; 
     \node[source] (param) at (3, 0) {$\predcost$};
     \node[process](Opt) at (6,0){Combinatorial \\Optimization}; 
     \node[source](decision) at (9,0){Optimal \\Decisions}; 


     \draw [arrow] (z) --(ML);
     \draw [arrow] (ML) --(param);
     \draw [arrow] (param) --(Opt);
     \draw [arrow] (Opt) --(decision);
\end{tikzpicture}
  
    \caption{Schematic diagram of a predict-then-optimize (PtO) problem. }
    \label{fig:problem_desc}
\end{figure}
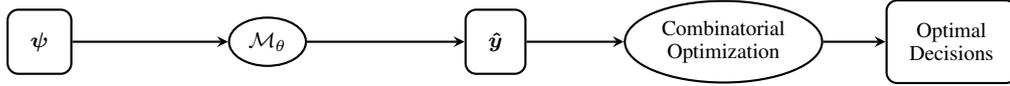

 \begin{algorithm}[H]
\caption{Gradient-descent with Smoothing}
\label{alg:smoothing}
\begin{algorithmic}[1] 
\STATE Initialize $\bm{\omega}$.
\FOR{each epoch}
\FOR{each instance $(\feature, \cost, \solution(\cost))$}
\STATE $\predcost = \ML (\feature)$ \label{alg:ln:pred} 
\STATE Obtain $\solution (\predcost) $ by solving a `smoothed' optimization\\
\STATE $\regret (\decisionvar, \cost)= \cost^\top \solution (\predcost)  - \cost^\top \solution(\cost)$ \\
\STATE $\bm{\omega} \leftarrow \bm{\omega} - \alpha \frac{d \regret (\decisionvar, \cost)  }{d  \predcost} \frac{d  \predcost }{d  \bm{\omega}}$

\ENDFOR
\ENDFOR
\end{algorithmic}
\end{algorithm}

\begin{algorithm}[H]
\caption{Gradient-descent with Surrogate Losses}
\label{alg:surrogate}
\begin{algorithmic}[1] 
\FOR{each epoch}
\FOR{each instance $(\feature, \cost, \solution(\cost))$}
\STATE $\predcost = \ML (\feature)$ \label{alg:ln:pred} 
\STATE Compute $\bm{\Tilde{\cost}}$\\
\STATE Obtain $\solution (\bm{\Tilde{\cost}}) $ by solving the original optimization\\
\STATE Compute the surrogate loss $\m{L}$ and $\nabla$
\STATE $\bm{\omega} \leftarrow \bm{\omega} - \alpha \nabla \frac{d  \predcost }{d  \bm{\omega}}$

\ENDFOR
\ENDFOR
\end{algorithmic}
\end{algorithm}

\begin{algorithm}[H]
\caption{Gradient-descent when Surrogate Losses are minimized using Smoothed Solver}
\label{alg:smoothedsurrogate}
\begin{algorithmic}[1] 
\FOR{each epoch}
\FOR{each instance $(\feature, \cost, \solution(\cost))$}
\STATE $\predcost = \ML (\feature)$ \label{alg:ln:pred} 
\STATE Compute $\bm{\Tilde{\cost}}$\\
\STATE Obtain $\solution (\bm{\Tilde{\cost}}) $ by solving a `smoothed' optimization\\
\STATE Compute the surrogate loss $\m{L}$ 
\STATE $\bm{\omega} \leftarrow \bm{\omega} - \alpha \frac{d \m{L}  }{d  \predcost}  \frac{d  \predcost }{d  \bm{\omega}}$

\ENDFOR
\ENDFOR
\end{algorithmic}
\end{algorithm}

\section{Different Approaches to  Decision-Focused Learning}
\label{Appendix:dflindetail}
In PtO problems, the empirical regret can be calculated if the cost, $\cost$, is observed in the training instances. However, just because it can be calculated does not mean it can be minimized using gradient descent.
Figure \ref{fig:dfl} illustrates the impact of integrating the optimization block into the training loop of neural networks. The key challenge is that to directly minimize $\regret$, it must be backpropagated through the optimization problem.
However, for a combinatorial problem $\solution(\predcost)$ does not change smoothly with $\predcost$, so the gradient, $\frac{d \solution(\predcost)}{d \predcost}$, is either zero or does not exist.
\begin{figure*}
    \centering
    \begin{subfigure}[b]{0.5\textwidth}
    \centering
    \includegraphics[scale=0.25]{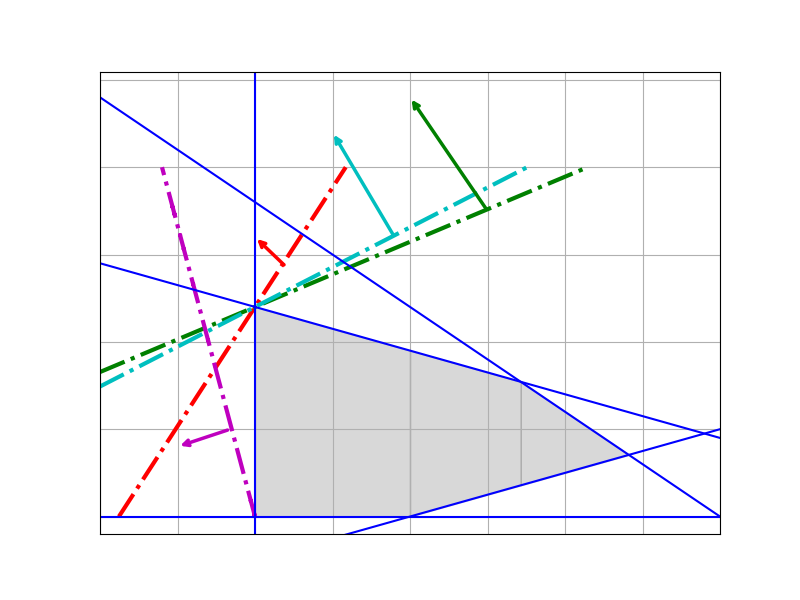}
    \caption{LP Solutions}
    \label{fig:LP}
  \end{subfigure}%
  \begin{subfigure}[b]{0.5\textwidth}
    \centering
    \includegraphics[scale=0.25]{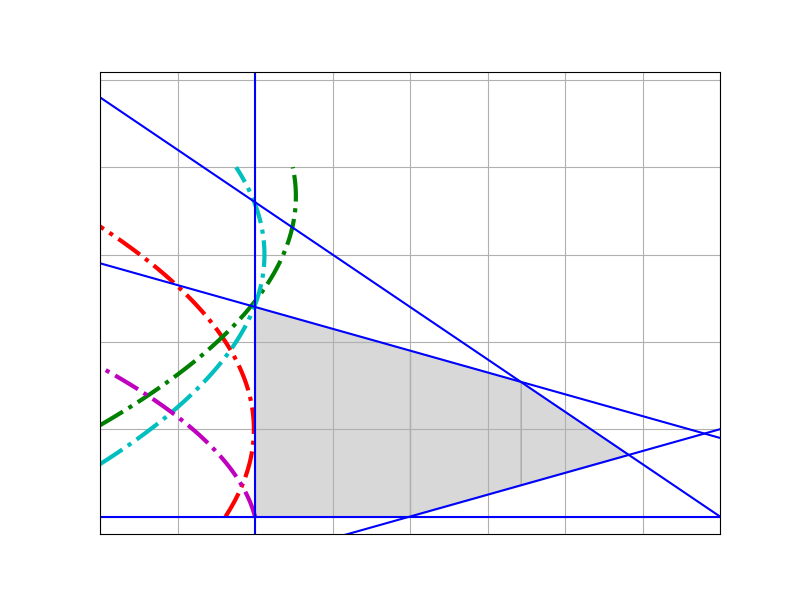}
    \caption{Solutions to the QP smoothing}
    \label{fig:QP}
  \end{subfigure}
  \vspace{0.5em}
    \caption{Schematic diagram showing the effect of QP smoothing. 
    (a) LP solutions and the corresponding isocost line for four cost vectors.
    The \textcolor{Green}{green}, \textcolor{cyan}{cyan} and \textcolor{red}{red} cost vectors result in the same solution, the top vertex, highlighting that a slight rotation of the isocost lines may not alter the LP solution. However, if the isocost lines rotate too much, for example, the \textcolor{violet}{violet} line, the solution suddenly shifts to a different vertex. (b) The isocost lines change after applying QP smoothing, and the solution is no longer restricted to a vertex. For example, the \textcolor{red}{red} vector results in a smooth change in the solution. However, even with smoothing, some cost vectors, like the \textcolor{cyan}{cyan} and \textcolor{Green}{green}, may still share the same solution.}
    \label{fig:QPSmoothing}
\end{figure*}

\paragraph{Differentiable Optimization by Smoothing.}
`Differentiable Optimization by Smoothing' is one approach to to circumvent this challenge. 
The aim of differentiable optimization is to represent an optimization problem as a differentiable mapping from its parameters to its solution.
Since for a COP, this mapping is \textbf{not} differentiable,
one prominent research direction in DFL involves smoothing the combinatorial optimization problem into a differentiable optimization problem.
We particularly focus on smoothing by regularization. There exists another from of smoothing---smoothing by perturbation, as proposed by \citet{DBB, FY2020, imle, NID}.
In this work, we focus on optimization problems with linear objective functions such as LPs and ILPs.
For an LP, the solution will always lie in one of the vertices of the LP simplex. 
So, the LP solution remains unchanged as long as the cost vector changes while staying within the corresponding normal cone \citep{boyd2004convex}. However, the solution will suddenly switch to a different vertex if the cost vector slightly moves outside the normal cone, as illustrated in Figure~\ref{fig:LP}.
Because the solution abruptly jumps between the vertices, the LP solution is not a differentiable function of the cost vector.

As explained in Section~\ref{sect:smoothing}, approaches under this category replace the original optimization problem with a `smoothed' version of the optimization problem, in which the solution can be expressed as a differentiable mapping of the parameter. 
For instance, if the original problem is an LP, it can be replaced with a QP by adding a quadratic regularizer to the objective of the LP.

The effect of QP smoothing is shown in Figure~\ref{fig:QP}.
The arrows represent the cost vectors, while the nonlinear curves indicate the iso-cost curves. After smoothing, the solution is not restricted to being at a vertex of the LP polyhedron.
In the `smoothed' problem, unlike the original LP, the solution do not change abruptly.
The solution either may not change or change smoothly with the change of the cost vector, as illustrated in Figure~\ref{fig:QP}. Consequently, $\solution (\predcost)$ becomes differentiable with respect to $\predcost$ and the solution, $\solution(\cost)$, can be represented as a differentiable function of the parameter $\cost$.
When the problem is an ILP, first LP, resulting from continuous relaxation is considered and then it is smoothed by adding quadratic regularizer.
Algorithm~\ref{alg:smoothing} explains this approach.

What we stress is that for a low smoothing strength (which ensures the solution after smoothing is not entire different from the LP solution), the QP solution might be same as the LP solution. Hence, two cost vectors can have the same solution. This is illustrated in Figure \ref{fig:QP}.
In this QP, 

DYS-Net \citep{mckenzie2024learning} provides an approximate solution to the quadratically regularized LP problem, where the computations are designed to be executed as standard neural network operations, enabling back-propagation through it.
To summarize, approaches in this category follow the training loop in Figure \ref{fig:dfl} but only after `smoothing' the optimization problem.
\begin{figure}
\centering
\begin{tikzpicture}[
  font=\rmfamily\footnotesize,
  every matrix/.style={ampersand replacement=\&,column sep=2cm,row sep=.6cm},
  source/.style={draw,thick, inner sep=.3cm},
  process/.style={draw,thick,rounded corners,inner sep=.3cm},
  sink/.style={source,fill=green!20},
  datastore/.style={draw,very thick,shape=datastore,inner sep=.3cm},
  dots/.style={gray,scale=2},
  to/.style={->,>=stealth',shorten >=1pt,semithick,font=\rmfamily\small},
  every node/.style={align=center}, 
  arrow/.style = {thick,-stealth}]

    \node[source] (z) at (0, 0) {$\feature$}; 
     \node[process](nn) at (2,0) { 
     \usebox{\mybox}};
     \node[process](LP) at (7,0) [minimum height=1cm]{$\solution (\predcost) = \argmin_{\decisionvar} \predcost^\top \decisionvar \; \;$ \\ $\text{s.t.} \;\;A \decisionvar = \mathbf{b};  \;\;
    \decisionvar \geq \bm{0}$}; 
     \node[process] (task) at (12,0) {$\regret(\solution (\predcost), \cost)$};
     \node[source](c) at (7.2,-2.5){$(\feature,\cost, \solution (\cost) )$}; 
     \path (nn) -- node[sloped] (hatc) {$\predcost$} (LP);
     \path (LP) -- node[sloped] (xhatc) {$\solution (\predcost)$} (task);
     \node   (backprop1) [below of=hatc] { \LARGE \textcolor{black}{ $\frac{d \regret}{d \predcost}$}};
     \node   (backprop2) [below of=xhatc] {\LARGE \textcolor{black}{$\frac{d \regret}{d  \solution (\predcost) }$}};
     \node (training) at (7.25,-2.5) [draw,dotted, thick,minimum width=3.5cm,minimum height=1.5cm] {};
     \node   (traintext) [below of=training] {Training Data };
     \node   (derivativetext) [below=0.05cm of LP] { \LARGE \textcolor{magenta}{$\frac{d \solution(\predcost)}{d \predcost}$} };

  \draw [arrow] (nn)--(hatc)--(LP);
  \draw [arrow] (LP)--(xhatc)--(task);
  \draw [arrow] (c) -|(task);
  \draw [arrow] (c) -|(z);
  \draw [arrow] (z) --(nn);
  \draw [arrow] (task.190) --(LP.352);
  \draw [arrow] (LP.192) --(nn.337);
\end{tikzpicture}
\caption{Decision-focused learning training loop.}
\vspace{0.5em}
\label{fig:dfl}
\end{figure}
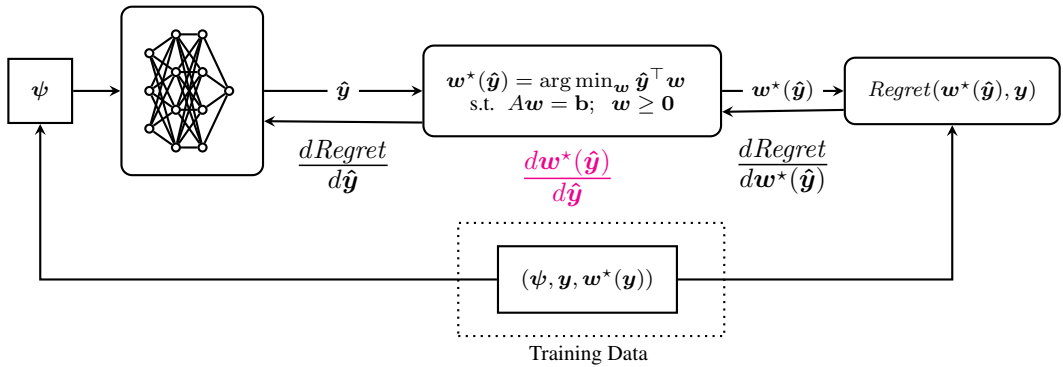 
\paragraph{Surrogate Losses for DFL.}
The primary goal of DFL is to minimize $\regret$. However, as explained earlier, $\regret$ cannot be minimized directly due to its non-differentiability. Techniques involving surrogate losses aim to address this challenge by identifying suitable surrogate loss functions and computing gradients or subgradients of these surrogate losses for optimization.
Figure \ref{fig:dfl_surrogate} depicts the training loop of DFL using surrogate loss functions. 
In this approach, $\regret(\solution (\predcost), \cost)$ is not explicitly computed. Instead, after predicting $\predcost$, a new cost vector $\bm{\Tilde{y}}$ is generated based on $\predcost$ and $\cost$, and the optimization problem is solved using this $\bm{\Tilde{y}}$. Subsequently, a surrogate loss is computed, using $\solution (\bm{\Tilde{y}})$ and $\solution (\cost)$, and its gradient,$\nabla$ (shown in pink) , is used for backpropagation.
We have explained this in terms of pseudocode using Algorithm \ref{alg:surrogate}.
For example, in the case of $\spoplus$, $\bm{\Tilde{\cost}} = 2\predcost - \cost$. 
As shown in \Eqref{eq:spoplus}
$\spoplus = $ $(2\predcost - \cost)^\top \solution (\cost)- (2\predcost - \cost)^\top \solution (2\predcost - \cost)$
 Then the gradient used for backpropagation is $\nabla = 2(\solution (\cost) -  \solution (2\predcost - \cost) )$.
On the other hand, in the case of $\SCE$, $\bm{\Tilde{\cost}} = \predcost$ and
$\SCE =
    \predcost^\top (\solution (\cost) - \solution (\predcost)) + \cost^\top ( \solution (\predcost) - \solution (\cost))$. So, in this case, the gradient for backpropagation is $\nabla = (\solution (\cost) - \solution (\predcost)) $.

\begin{figure}
\centering
\begin{tikzpicture}[
  font=\rmfamily\footnotesize,
  every matrix/.style={ampersand replacement=\&,column sep=2cm,row sep=.6cm},
  source/.style={draw,thick, inner sep=.3cm},
  process/.style={draw,thick,rounded corners,inner sep=.3cm},
  sink/.style={draw,thick,ellipse, minimum width=1pt,
    align=center},
  datastore/.style={draw,very thick,shape=datastore,inner sep=.3cm},
  dots/.style={gray,scale=2},
  to/.style={->,>=stealth',shorten >=1pt,semithick,font=\rmfamily\small},
  every node/.style={align=center}, 
  arrow/.style = {thick,-stealth}]

    \node[source] (z) at (0, 0) {$\feature$}; 
     \node[process](nn) at (2,0) { 
     \usebox{\mybox}};
     \node[process](LP) at (7.5,2) [minimum height=1cm]{$\solution (\bm{\Tilde{y}}) = \argmin_{\decisionvar} \bm{\Tilde{y}}^\top \decisionvar \; \;$ \\ $\text{s.t.} \;\;A \decisionvar = \mathbf{b};  \;\;
    \decisionvar \geq \bm{0}$}; 
     \node[sink] (mixer) at (4,0) {};

     \node[source] (transformed) at (4, 2) {$\bm{\Tilde{y}}$};
     \node[source] (grad) at (11,-0.5) {$\nabla $};
     \node[source](c) at (4,-2.5){$(\feature,\cost, \solution (\cost) )$}; 
     \path (nn) -- node[sloped] (hatc) {$\predcost$} (mixer);
     \node (training) at (4,-2.5) [draw,dotted, thick,minimum width=3.5cm,minimum height=1.5cm] {};
     \node   (traintext) [below of=training] {Training Data };
     \node   (derivativetext) [below=1.25cm of LP] {  \large \textcolor{magenta}{backpropgating $\nabla$} };

  \draw [arrow] (nn)--(hatc)-- (mixer);
  \draw [arrow] (c) -|(grad);
  \draw [arrow,dotted, magenta] (grad) --(nn.335);
  \draw [arrow] (z) --(nn);
   \draw [arrow] (c) --(mixer);
   \draw [arrow] (mixer) --(transformed);
  \draw [arrow] (transformed) --(LP);
  \draw [arrow] (c) -|(z);
  \draw [arrow] (LP) --(grad);
\end{tikzpicture}
\caption{Decision-focused learning using surrogate loss functions.}
\vspace{2em}
\label{fig:dfl_surrogate}
\end{figure}
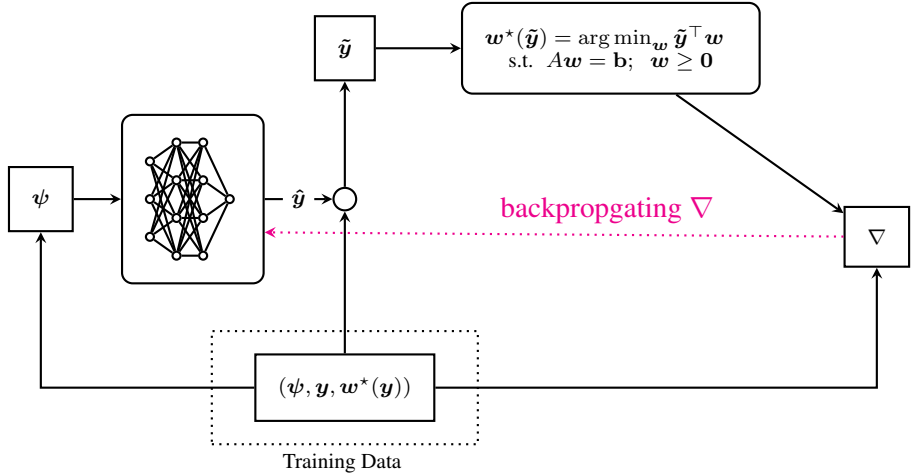 

\paragraph{Combining Surrogate Losses with Differentiable Optimization.}
The core idea proposed in this paper is to combine these two approaches. Specifically, the original optimization problem in Figure ~\ref{fig:dfl_surrogate} is replaced with a smoothed version, \textbf{allowing direct backpropagation through the smoothed problem instead of using $\nabla$}. We emphasis that this changes the gradient of the surrogate losses, i.e., Eq.~\ref{eq:SPO_fullgrad} and Eq.~\ref{eq:SCE_fullgrad} would reduce to SPO+ and SCE subgradients, respectively.
We explain this in Algorithm \ref{alg:smoothedsurrogate}.
\section {Proofs}
\subsection {Proof of Theorem \ref{label:pretheorem}}\label{proof_ppretheorem}

\begin{proof}
Let us consider the following optimization problem:
$$
\min_{\decisionvar \in \feas} \cost^\top \decisionvar 
$$
We assume $\solution (\cost)$ is the unique optimal solution, and therefore there is one unique $ \solution (\cost)$, so that,  ${\decisionvar^\prime}^\top  \cost  - \solution (\cost)^\top \cost \geq 0  \;  \forall \decisionvar^\prime \in \feas$. So, $\nabla_{\spoplus } $ would be zero only if the solution using $(2\predcost - \cost)$ is same as the true solution $ \solution (\cost)$.

\noindent Hence we can write,
\begin{align*}
    \nabla_{\spoplus }  = 0  \iff \solution(2\predcost - \cost)  = \solution ( \cost) &\iff (2\predcost - \cost)^\top \solution ( \cost) \leq (2\predcost - \cost)^\top \decisionvar^\prime  ; \; \; \forall \decisionvar^\prime \in \feas \nonumber \\
    & \iff 2\predcost ^\top (\decisionvar^\prime - \solution ( \cost) ) \geq \cost ^\top (\decisionvar^\prime - \solution ( \cost) ) \; \; \forall \decisionvar^\prime \in \feas \nonumber \\
    &\iff \predcost^\top \decisionvar^\prime - \predcost^\top \solution(\cost ) \geq \frac{1}{2} \bigg (\cost ^\top (\decisionvar^\prime - \solution ( \cost) ) \bigg) ; \; \; \forall \decisionvar^\prime \in \feas 
    \end{align*}

So, we can write:
\begin{align}
     \label{eq:spo_implies}
     \m{Y}_{ \mathit{SPO}^{+} } (\cost) = 
     \{ \predcost: \predcost^\top \decisionvar^\prime - \predcost^\top \solution(\cost ) \geq \frac{1}{2} \bigg (\cost ^\top (\decisionvar^\prime - \solution ( \cost) ) \bigg) ; \; \; \forall \decisionvar^\prime \in \feas  \}
\end{align}
As, $(\cost ^\top (\decisionvar^\prime - \solution ( \cost) ) ) \geq 0$, we can write
$$\predcost^\top \decisionvar^\prime - \predcost^\top \solution(\cost ) \geq 0 \quad \forall \;  \predcost \in \m{Y}_{ \mathit{SPO}^{+} } (\cost)$$
Hence, $\solution(\cost )$ is an optimal solution to any $\predcost \in \m{Y}_{ \mathit{SPO}^{+} } (\cost)$.

Similarly, for $\SCE$,
\begin{align*}
    \nabla_{\SCE } = 0 \iff \solution(\predcost )  = \solution ( \cost) &\iff  \predcost^\top  \solution(\cost )
    \leq \cost^\top \decisionvar^\prime; \; \; \forall \decisionvar^\prime \in \feas 
   \nonumber\\
    &\iff \predcost^\top \decisionvar^\prime - \predcost^\top \solution(\cost ) \geq 0 ; \; \; \forall \decisionvar^\prime \in \feas 
\end{align*}
Hence, we can write:
\begin{align}
     \label{eq:sce_implies}
     \m{Y}_{ \mathit{SCE} } (\cost) = 
     \{ \predcost: \predcost^\top \decisionvar^\prime - \predcost^\top \solution(\cost ) \geq 0 ; \; \; \forall \decisionvar^\prime \in \feas \}
\end{align}
This shows that $\solution(\cost )$ is an optimal solution to any $\predcost \in  \m{Y}_{ \mathit{SCE} } (\cost)$.
\end{proof}


\subsection {Proof of Theorem \ref{theorem:1}}\label{proof_theorem1}
As a preliminary step toward the proof, we establish the following corollaries:
\begin{corollary}
\label{corollary3}
For a given cost vector, $\cost \in \mathbb{R}^K$, 
    $\m{Y}_{ \mathit{SPO}^{+} } (\cost) \subseteq \m{Y}_{ \mathit{SCE} } (\cost)$.
\end{corollary}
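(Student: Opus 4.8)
The plan is to deduce the inclusion directly from the explicit descriptions of the two sets already established within the proof of Theorem~\ref{label:pretheorem}, namely Eq.~\ref{eq:spo_implies} and Eq.~\ref{eq:sce_implies}. These read
\begin{align*}
\m{Y}_{\mathit{SPO}^{+}}(\cost) &= \{ \predcost : \predcost^\top \decisionvar^\prime - \predcost^\top \solution(\cost) \geq \tfrac{1}{2} \cost^\top (\decisionvar^\prime - \solution(\cost)) \;\; \forall \decisionvar^\prime \in \feas \}, \\
\m{Y}_{\mathit{SCE}}(\cost) &= \{ \predcost : \predcost^\top \decisionvar^\prime - \predcost^\top \solution(\cost) \geq 0 \;\; \forall \decisionvar^\prime \in \feas \}.
\end{align*}
So both sets are cut out by the same family of linear inequalities indexed by the feasible points $\decisionvar^\prime \in \feas$, differing only in their right-hand sides; the task reduces to comparing those right-hand sides.

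First I would take an arbitrary $\predcost \in \m{Y}_{\mathit{SPO}^{+}}(\cost)$ and fix any $\decisionvar^\prime \in \feas$. The only external fact I need is that $\solution(\cost)$ is optimal for the true cost $\cost$ over $\feas$, which gives $\cost^\top \solution(\cost) \leq \cost^\top \decisionvar^\prime$, i.e. $\cost^\top(\decisionvar^\prime - \solution(\cost)) \geq 0$. Hence the right-hand side of the SPO$^{+}$ inequality is nonnegative, so the membership condition yields the chain
$$\predcost^\top \decisionvar^\prime - \predcost^\top \solution(\cost) \geq \tfrac{1}{2}\cost^\top(\decisionvar^\prime - \solution(\cost)) \geq 0.$$
Since $\decisionvar^\prime \in \feas$ was arbitrary, $\predcost$ satisfies the defining inequality of $\m{Y}_{\mathit{SCE}}(\cost)$ for every feasible point, and therefore $\predcost \in \m{Y}_{\mathit{SCE}}(\cost)$, establishing $\m{Y}_{\mathit{SPO}^{+}}(\cost) \subseteq \m{Y}_{\mathit{SCE}}(\cost)$.

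I expect no genuine obstacle: optimality of $\solution(\cost)$ makes the SPO$^{+}$ threshold a uniform strengthening of the SCE threshold, so the inclusion is a one-line termwise comparison. The only point requiring care is to invoke the two characterizations from Theorem~\ref{label:pretheorem} exactly as derived there (in particular under the same uniqueness-of-solution assumption), so that the inequalities being compared are literally the ones already proved rather than a reformulation.
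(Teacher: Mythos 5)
Your proposal is correct and matches the paper's own proof: both arguments compare the two characterizations from Theorem~\ref{label:pretheorem} inequality-by-inequality and use the optimality of $\solution(\cost)$ to conclude that the SPO$^{+}$ right-hand side $\tfrac{1}{2}\cost^\top(\decisionvar^\prime - \solution(\cost))$ is nonnegative, hence dominates the SCE right-hand side of $0$. Your version merely spells out the element-wise verification slightly more explicitly; there is no substantive difference.
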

\begin{proof}
    We showed in \eqref{eq:spo_implies} that
    $$\m{Y}_{ \mathit{SPO}^{+} } (\cost) = 
     \{ \predcost: \predcost^\top \decisionvar^\prime - \predcost^\top \solution(\cost ) \geq \frac{1}{2} \bigg (\cost ^\top (\decisionvar^\prime - \solution ( \cost) ) \bigg) ; \; \; \forall \decisionvar^\prime \in \feas  \}$$

We also proved in \eqref{eq:sce_implies} that
$$ \m{Y}_{ \mathit{SCE} } (\cost) = 
     \{ \predcost: \predcost^\top \decisionvar^\prime - \predcost^\top \solution(\cost ) \geq 0 ; \; \; \forall \decisionvar^\prime \in \feas \}$$
As, $\cost ^\top (\decisionvar^\prime - \solution ( \cost) )  \geq 0$, it is clear that  $\m{Y}_{ \mathit{SPO}^{+} } (\cost) \subseteq \m{Y}_{ \mathit{SCE} } (\cost)$.
\end{proof}
\begin{corollary}
\label{corollary4}
    For a given $\cost \in \mathbb{R}^K$, let us assume that there exists $\predcost_1$, $\predcost_2$ $\in \mathbb{R}^K$ and a feasible solution $\Tilde{\decisionvar}$, such that
    $$\predcost_1^\top \Tilde{\decisionvar} - \predcost_1^\top \solution(\cost ) \geq \frac{1}{2} \big (\cost ^\top (\Tilde{\decisionvar} - \solution ( \cost) ) \big)  $$ and
     $$0 \leq \predcost_2^\top \Tilde{\decisionvar} - \predcost_2^\top \solution(\cost ) < \frac{1}{2} \big (\cost ^\top (\Tilde{\decisionvar} - \solution ( \cost) ) \big)  $$
    Then the following statements hold:
\begin{enumerate}
\item If for any $\bm{\Delta} \in \mathbb{R}^K$, 
$\left( \predcost_1 + \bm{\Delta} \right)^\top \solution(\cost) 
    > 
   \left( \predcost_1 + \bm{\Delta} \right)^\top \Tilde{\decisionvar} \implies 
   \left( \predcost_2 + \bm{\Delta} \right)^\top \solution(\cost) 
    > 
   \left( \predcost_2 + \bm{\Delta} \right)^\top \Tilde{\decisionvar}
   $.
    \item There exists $\bm{\Delta} \in \mathbb{R}^K$, such that $\left( \predcost_2 + \bm{\Delta} \right)^\top \solution(\cost) 
    > 
   \left( \predcost_2 + \bm{\Delta} \right)^\top \Tilde{\decisionvar}$, but 
   $\left( \predcost_1 + \bm{\Delta} \right)^\top \solution(\cost) 
    \leq 
   \left( \predcost_1 + \bm{\Delta} \right)^\top \Tilde{\decisionvar}$.
\end{enumerate}
\end{corollary}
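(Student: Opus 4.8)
The plan is to collapse every inequality in the statement to a one-dimensional comparison along the fixed direction $\vu \coloneqq \Tilde{\decisionvar} - \solution$. With this shorthand the two hypotheses read $\predcost_1^\top \vu \geq \tfrac{1}{2}\cost^\top \vu$ and $0 \leq \predcost_2^\top \vu < \tfrac{1}{2}\cost^\top \vu$. Moreover, for any $\predcost$ the strict optimality condition $(\predcost + \bm{\Delta})^\top \solution > (\predcost + \bm{\Delta})^\top \Tilde{\decisionvar}$ is equivalent to $(\predcost + \bm{\Delta})^\top \vu < 0$, since $\solution - \Tilde{\decisionvar} = -\vu$. Writing $\delta \coloneqq \bm{\Delta}^\top \vu$, this is simply $\delta < -\predcost^\top \vu$. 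Thus the whole corollary reduces to comparing the two scalar thresholds $-\predcost_1^\top \vu$ and $-\predcost_2^\top \vu$ against $\delta$.

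The crucial observation comes from chaining the hypotheses: $\predcost_2^\top \vu < \tfrac{1}{2}\cost^\top \vu \leq \predcost_1^\top \vu$, so $\predcost_2^\top \vu < \predcost_1^\top \vu$ strictly, i.e.\ $-\predcost_1^\top \vu < -\predcost_2^\top \vu$. I would also note in passing that this chain forces $\cost^\top \vu > 0$, hence $\vu \neq \vzero$, a fact needed to construct $\bm{\Delta}$ in the second part.

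For statement~1, I would observe that the hypothesis for $\predcost_1$ is exactly $\delta < -\predcost_1^\top \vu$; combining it with $-\predcost_1^\top \vu < -\predcost_2^\top \vu$ gives $\delta < -\predcost_2^\top \vu$, which is precisely the conclusion for $\predcost_2$. For statement~2, I need some $\delta$ with $-\predcost_1^\top \vu \leq \delta < -\predcost_2^\top \vu$; this half-open interval is nonempty exactly because of the strict gap just established, so I take $\delta = -\predcost_1^\top \vu$. Since $\vu \neq \vzero$, I realize this value by setting $\bm{\Delta} = \tfrac{\delta}{\|\vu\|_2^2}\,\vu$, which satisfies $\bm{\Delta}^\top \vu = \delta$. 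Then $(\predcost_2+\bm{\Delta})^\top\vu < 0$ (strict inequality for $\predcost_2$) while $(\predcost_1+\bm{\Delta})^\top\vu = 0$, giving the required non-strict failure for $\predcost_1$.

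There is no genuinely hard step once the reduction to the scalar $\delta$ is in place: both claims become elementary order comparisons. The only points demanding care are keeping strict and non-strict inequalities aligned with the ``$>$'' and ``$\leq$'' in the statement—the strictness of $\predcost_2^\top \vu < \predcost_1^\top \vu$ is exactly what makes the target interval in part~2 nonempty—and verifying $\vu \neq \vzero$ so that scaling $\vu$ can hit any prescribed $\delta$.
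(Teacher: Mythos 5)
Your proof is correct, and for statement~1 it is essentially the paper's argument: the chain $\bm{\Delta}^\top(\solution-\Tilde{\decisionvar}) > \predcost_1^\top(\Tilde{\decisionvar}-\solution) \geq \tfrac{1}{2}\cost^\top(\Tilde{\decisionvar}-\solution) > \predcost_2^\top(\Tilde{\decisionvar}-\solution)$ used in the paper is exactly your scalar comparison $\delta < -\predcost_1^\top \vu < -\predcost_2^\top \vu$ along $\vu = \Tilde{\decisionvar}-\solution$. The one substantive difference is in statement~2: the paper exhibits the explicit witness $\bm{\Delta} = -\tfrac{1}{2}\cost$, for which both required inequalities follow immediately from the two hypotheses without any nondegeneracy check, whereas you construct $\bm{\Delta}$ as a multiple of $\vu$ hitting the threshold $\delta = -\predcost_1^\top\vu$, which obliges you to verify $\vu \neq \vzero$ (which you do, correctly, from $0 \le \predcost_2^\top\vu < \tfrac{1}{2}\cost^\top\vu$). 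Both witnesses are valid; the paper's is slightly more economical, while yours makes transparent that any $\delta$ in the nonempty half-open interval $[-\predcost_1^\top\vu, -\predcost_2^\top\vu)$ would serve, which is a marginally more informative way to see why the strict gap between the two hypotheses is exactly what the claim needs.
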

\begin{proof}

    \begin{enumerate}
        \item
        \begin{align*}
        \left( \predcost_1 + \bm{\Delta} \right)^\top \solution(\cost) 
    > 
   \left( \predcost_1 + \bm{\Delta} \right)^\top \Tilde{\decisionvar} \implies
   \bm{\Delta}^\top (\solution(\cost) - \Tilde{\decisionvar}) >  \predcost_1^\top ( \Tilde{\decisionvar}  - \solution(\cost) )) \geq \frac{1}{2} \cost^\top ( \Tilde{\decisionvar}  - \solution(\cost) )) > \predcost_2^\top ( \Tilde{\decisionvar}  - \solution(\cost) ))
   \end{align*}
   After rearranging, we can write:
   \begin{align*}
     \left( \predcost_1 + \bm{\Delta} \right)^\top \solution(\cost) 
    > 
   \left( \predcost_1 + \bm{\Delta} \right)^\top \Tilde{\decisionvar} \implies  
   \left( \predcost_2 + \bm{\Delta} \right)^\top \solution(\cost) 
    > 
   \left( \predcost_2 + \bm{\Delta} \right)^\top \Tilde{\decisionvar}
   \end{align*}
   \item 
   We begin by writing the assumption
   $$\predcost_2^\top \Tilde{\decisionvar} - \predcost_2^\top \solution(\cost ) < \frac{1}{2} \big (\cost ^\top (\Tilde{\decisionvar} - \solution ( \cost) ) \big)$$
   This implies the following:
   \begin{align*}
       \left(\predcost_2 - \frac{1}{2} \cost \right)^\top \Tilde{\decisionvar} < \left(\predcost_2 - \frac{1}{2} \cost \right)^\top  \solution (\cost)
   \end{align*}
Be defining, $ \bm{\Delta} = -\frac{1}{2} \cost $, we can write
 $\left( \predcost_2 + \bm{\Delta} \right)^\top \solution(\cost) 
    > 
   \left( \predcost_2 + \bm{\Delta} \right)^\top \Tilde{\decisionvar}$.
   
   But,  $$\predcost_1^\top \Tilde{\decisionvar} - \predcost_1^\top \solution(\cost ) \geq \frac{1}{2} \big (\cost ^\top (\Tilde{\decisionvar} - \solution ( \cost) ) \big)  $$.
   
   Hence, $\left( \predcost_1 + \bm{\Delta} \right)^\top \solution(\cost) 
    \leq 
   \left( \predcost_1 + \bm{\Delta} \right)^\top \Tilde{\decisionvar}$.
    \end{enumerate}
\end{proof}
\paragraph{Proof of Theorem \ref{theorem:1}.}
\begin{proof}
We will use $\{ \bm{\Delta} \}_{ \mathit{SCE}(\cost) }$ to denote the set of all $\bm{\Delta}$ such that $(\predcost + \bm{\Delta})^\top \solution (\cost) > (\predcost + \bm{\Delta})^\top \solution (\predcost + \bm{\Delta})$ for any $\predcost \in \m{Y}_{ \mathit{SCE} } (\cost)$.
Similarly, $\{ \bm{\Delta} \}_{\mathit{SPO}^{+}(\cost) }$ denotes the set of all $\bm{\Delta}$ such that $(\predcost + \bm{\Delta})^\top \solution (\cost) > (\predcost + \bm{\Delta})^\top \solution (\predcost + \bm{\Delta})$ for any $\predcost \in \m{Y}_{  \mathit{SPO}^{+} } (\cost)$.
This allows us to rewrite 
$$\Gamma_{\mathit{SCE}} = \min_{\bm{\Delta} \in \{ \bm{\Delta} \}_{ \mathit{SCE}(\cost) } } \|\bm{\Delta}\|_2$$
$$\Gamma_{\mathit{SPO}^{+}} = \min_{\bm{\Delta} \in \{ \bm{\Delta} \}_{ \mathit{SPO}^{+}(\cost) } } \|\bm{\Delta}\|_2$$

We have proved in Corollary~\ref{corollary3} that $\m{Y}_{ \mathit{SPO}^{+} } (\cost) \subseteq \m{Y}_{ \mathit{SCE} } (\cost)$.
It is straightforward that if $\{ \bm{\Delta} \}_{ \mathit{SCE}(\cost) }$ and $\{ \bm{\Delta} \}_{\mathit{SPO}^{+}(\cost) }$ are the same set if $\m{Y}_{ \mathit{SPO}^{+} } (\cost)$ and $ \m{Y}_{ \mathit{SCE} } (\cost)$ are identical. Hence, in this case, it follows trivially that $\Gamma_{\mathit{SCE}} =
    \Gamma_{\mathit{SPO}^{+}} $.
    
Let us consider the case, when $\m{Y}_{ \mathit{SPO}^{+} } (\cost) \subset\m{Y}_{ \mathit{SCE} } (\cost)$.
This implies there exist a $\predcost_2 \in \m{Y}_{ \mathit{SCE} } (\cost) \setminus \m{Y}_{ \mathit{SPO}^{+} } (\cost)$. 
The second statement in Corollary~\ref{corollary4} shows that there exists a perturbation vector, $\bm {\Delta}$, such that
$\solution(\predcost_2 + \bm {\Delta} ) \neq \solution (\cost)$ for some $\predcost_2 \in \m{Y}_{ \mathit{SCE} } (\cost) \setminus \m{Y}_{ \mathit{SPO}^{+} } (\cost)$, while for all $\predcost_1 \in \m{Y}_{ \mathit{SPO}^{+} } (\cost)$, $\solution(\predcost_1 + \bm {\Delta} ) = \solution (\cost)$.
Hence, $\bm {\Delta} \in \{ \bm{\Delta} \}_{ \mathit{SCE}(\cost) }  $, but $\bm {\Delta} \notin \{ \bm{\Delta} \}_{ \mathit{SPO}^{+}(\cost) }  $.

Now, if for any 
$\predcost_1 \in \m{Y}_{ \mathit{SPO}^{+} } (\cost)$, $\solution(\predcost_1 + \bm {\Delta}_1 ) \neq \solution (\cost)$, then $\bm {\Delta}_1 \in \{ \bm{\Delta} \}_{ \mathit{SPO}^{+}(\cost) } $.
Moreover, as $\m{Y}_{ \mathit{SPO}^{+} } (\cost) \subset  \m{Y}_{ \mathit{SCE} } (\cost)$, $\predcost_1 \in \m{Y}_{ \mathit{SCE} } (\cost)$.
Hence,  $\bm {\Delta}_1 \in \{ \bm{\Delta} \}_{ \mathit{SCE}(\cost) } $. 
To summarize, $\bm {\Delta} \in \{ \bm{\Delta} \}_{ \mathit{SPO}^{+}(\cost) } \implies$ $\bm {\Delta} \in \{ \bm{\Delta} \}_{ \mathit{SCE}(\cost) }  $, but not the reverse.
So, $$  \{ \bm{\Delta} \}_{\mathit{SPO}^{+}(\cost) }  \subset \{ \bm{\Delta} \}_{ \mathit{SCE}(\cost) } $$


As, $\{ \bm{\Delta} \}_{\mathit{SPO}^{+}(\cost) }\subset \{ \bm{\Delta} \}_{ \mathit{SCE}(\cost) }$,
$$\Gamma_{\mathit{SPO}^{+}} = \min_{\bm{\Delta} \in \{ \bm{\Delta} \}_{ \mathit{SPO}^{+}(\cost) } } \|\bm{\Delta}\|_2  \geq \Gamma_{\mathit{SCE}} = \min_{\bm{\Delta} \in \{ \bm{\Delta} \}_{ \mathit{SCE}(\cost) } } \|\bm{\Delta}\|_2 $$

\end{proof}

\subsection{Proof of Proposition \ref{prop:1}}\label{proof_prop1}
\begin{proof}
\begin{enumerate}
    \item Following the definition of $\SCE$,
    \begin{align*}
    \SCE(\solution (\predcost), \cost)&= (\predcost - \cost)^\top \solution (\cost)-  (\predcost - \cost)^\top \solution (\predcost) \\&=
    \predcost^\top (\solution (\cost) - \solution (\predcost)) + \cost^\top ( \solution (\predcost) - \solution (\cost))
    \end{align*}
    $\predcost^\top (\solution (\cost) - \solution (\predcost)) \geq 0$, because $\solution (\predcost)$ is the optimal solution to $\predcost$.
    In a similar way, $\cost^\top ( \solution (\predcost) - \solution (\cost)) \geq 0$. Hence, $\SCE(\solution (\predcost), \cost) \geq 0$.
    \item We will prove the claim by contradiction. Assume that $\m{L}_{SCE}(\solution (\predcost), \cost)=0$ but $\regret(\solution (\predcost), \cost)= \cost^\top ( \solution (\predcost) - \solution (\cost)) > 0$ . 
    This is possible if $ \solution (\predcost) \neq \solution (\cost)$.
    As the solution to $\predcost$ is different from $\solution (\cost)$, the singleton assumption implies that $\exists \, \decisionvar^\prime \in  \feas \setminus \{\solution (\cost) \}  : \,  \predcost^\top \decisionvar^\prime < \predcost^\top \solution (\cost) $. 
	In this case, we have:
	\begin{align*}
		 &\predcost^\top \solution (\cost) - \predcost^\top \decisionvar^\prime >0 \\
		&\Rightarrow  ( \predcost^\top \solution (\cost) - \predcost^\top \decisionvar^\prime ) + 
		(  \cost^\top \decisionvar^\prime - \cost^\top \solution (\cost) ) >  (  \cost^\top \decisionvar^\prime - \cost^\top \solution (\cost) ) \geq 0\\
		&\Rightarrow  (\predcost - \cost)^\top \solution -  (\predcost - \cost)^\top \solution (\predcost) > 0 
	\end{align*}
\noindent	In the second line, $\cost^\top \decisionvar^\prime - \cost^\top \solution (\cost) $ is added in both sides and this term is nonnegative as $\solution (\cost)$ is the optimal solution to $\cost$. 
	This implies $\m{L}_{SCE}(\solution (\predcost), \cost)>0$ and we arrive at a contradiction. Thus we prove that 
    $\m{L}_{SCE}(\solution (\predcost), \cost)=0$ $\implies$ $\regret(\solution (\predcost), \cost)=0$.
    
    Next, assume  $\regret(\solution (\predcost), \cost)=0$. This implies that $\cost^\top  \solution (\predcost) = \cost^\top \solution (\cost)$.
    This can only be true if $\solution (\predcost) = \solution (\cost)$ because of the singleton assumption. 
    Hence, $\SCE(\solution (\predcost), \cost)$ $= (\predcost - \cost)^\top ( \solution (\cost) - \solution (\predcost) ) =0 $.
    \end{enumerate}
\end{proof}
\subsection{Generalization Bounds for SCE Loss} \label{generalization_bounds}

We show generalization bounds for SCE loss similar to the bounds shown for true regret by \cite{el2019generalization}.  For notational brevity, we first define SCE in terms of the predicted and true parameters, i.e., \begin{equation*}
    l_{SCE}(\predcost,\cost) = \m{L}_{SCE} (\solution (\predcost), \cost) = (\predcost - \cost)^\top (\solution - \solution (\predcost))
\end{equation*}  
where $\predcost = \MLnoomega(\feature)$ is the predicted cost using the predcitive model $\MLnoomega$. We can also define
\begin{equation*}
    R_{SCE}(\MLnoomega) = \mathbb{E}[l_{SCE}(\MLnoomega(\feature),y)] \text{ and }  \widehat{R}_{SCE}(\MLnoomega) = \frac{1}{N} \sum_{i=1}^N l_{SCE}(\MLnoomega(\feature_i),y_i)
\end{equation*}
as the true and empirical risk for a given sample $\{ (\feature_i, \cost_i) \}_{i=1}^N$ for SCE loss, respectively. 

In order to show generalization bounds for SCE loss, we need to define the Rademacher complexity of a set of functions $\mathcal{H}$ with $l_{SCE}$. The sample Rademacher complexity for a given sample $\{ (\feature_i, \cost_i) \}_{i=1}^N$ is given by
\begin{equation*}
\widehat{\mathfrak{R}}_{SCE}^N(\mathcal{H}) = \mathbb{E}_{\sigma} \left[ \sup_{\MLnoomega \in \mathcal{H}} \frac{1}{N} \sum_{i=1}^N\sigma_i l_{SCE}(\MLnoomega(\feature_i),\cost_i)\right]
\end{equation*}
where $\sigma_1, \sigma_2, \ldots, \sigma_N$ are i.i.d. random variables with $\mathbb{P}(\sigma_i = 1) = 1/2$ and $\mathbb{P}(\sigma_i = -1) = 1/2$ for $i=1,2,\ldots,N$. The expected  Rademacher complexity is defined as $\mathfrak{R}_{SCE}^N(\mathcal{H}) = \mathbb{E}[\widehat{\mathfrak{R}}_{SCE}^N(\mathcal{H})]$ where the expectation is with respect to the i.i.d. samples of size $N$ from the true distribution. 

Assume that the set of all feasible solutions $\feas = \{ \decisionvar: A \decisionvar = \mathbf{b};  \;\;
    \decisionvar \geq \bm{0} \}$ is bounded, i.e., there exists $D$ such that $D = max_{\decisionvar,\decisionvar' \in \feas} \| \decisionvar - \decisionvar'\|$. Also assume that the set of all cost vectors is $\mathcal{Y}$ such that $\mathcal{Y} \subseteq \{\cost : \|\cost\| \leq 1\}$. Note that, since we consider linear objective functions, this assumption is not restrictive, as $\cost'$ with $\|\cost'\| > 1$ can be replaced by $\cost = \cost'/\| \cost'\|$ without changing the optimal solution and ensuring  $\|\cost\| = 1$. 

The following proposition shows the generalization bound for SCE loss. 
\begin{proposition}
    Let $\mathcal{H}$ be a set of functions from the set of all features to $\{\cost: \|\cost\| \leq 1\}$. Then for any $\delta > 0$,
    \begin{equation}\label{eq:for_each_ML}
        R_{SCE}(\MLnoomega) - \widehat{R}_{SCE}(\MLnoomega) \leq 2\mathfrak{R}_{SCE}^N(\mathcal{H}) + 2D\sqrt{\frac{\log(1/\delta)}{2N}}
    \end{equation}
    
    holds for all $\MLnoomega \in \mathcal{H}$ with probability at least $1-\delta$ for the sample $\{ (\feature_i, \cost_i) \}_{i=1}^N$  from the joint distribution of features and parameters.  If $\widehat{\MLnoomega}_n$ is a  minimizer of the emprical risk $\widehat{R}_{SCE}$, then the inequality 
    \begin{equation}\label{eq:minimizer_ML}
        R_{SCE}(\widehat{\MLnoomega}_n) - \min_{\MLnoomega \in \mathcal{H}} R_{SCE}(\MLnoomega) \leq 2\mathfrak{R}_{SCE}^N(\mathcal{H}) + 4D\sqrt{\frac{\log(2/\delta)}{2N}}
    \end{equation}
    also holds probability at least $1-\delta$. 
\end{proposition}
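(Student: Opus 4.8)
The plan is to run the canonical Rademacher-complexity generalization argument on the loss class $\{(\feature,\cost)\mapsto l_{SCE}(\MLnoomega(\feature),\cost):\MLnoomega\in\mathcal{H}\}$; the only problem-specific work is to pin down the range of $l_{SCE}$ so that the constants come out exactly as stated. First I would establish that $0\le l_{SCE}(\predcost,\cost)\le 2D$ whenever $\|\predcost\|\le 1$ and $\|\cost\|\le 1$. The lower bound is precisely the nonnegativity proved in Proposition~\ref{prop:1}. For the upper bound, write $l_{SCE}(\predcost,\cost)=(\predcost-\cost)^\top(\solution-\solution(\predcost))$ and apply Cauchy--Schwarz, $l_{SCE}\le\|\predcost-\cost\|\,\|\solution(\cost)-\solution(\predcost)\|\le 2D$, using $\|\predcost-\cost\|\le\|\predcost\|+\|\cost\|\le 2$ and the diameter bound $D=\max_{\decisionvar,\decisionvar'\in\feas}\|\decisionvar-\decisionvar'\|$. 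The crucial observation is that, thanks to Proposition~\ref{prop:1}, the loss lives in an interval of \emph{width} $2D$ (namely $[0,2D]$) rather than $[-2D,2D]$; this is exactly what produces the factor $2D$, not $4D$, in front of the square-root term in \eqref{eq:for_each_ML}.

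For the first inequality, define $\Phi(S)=\sup_{\MLnoomega\in\mathcal{H}}\bigl(R_{SCE}(\MLnoomega)-\widehat{R}_{SCE}(\MLnoomega)\bigr)$ on the sample $S=\{(\feature_i,\cost_i)\}_{i=1}^N$. Replacing a single data point changes $\widehat{R}_{SCE}$ by at most $2D/N$ (the loss range divided by $N$), so $\Phi$ has bounded differences $2D/N$, and McDiarmid's inequality gives $\Phi(S)\le\mathbb{E}[\Phi(S)]+2D\sqrt{\log(1/\delta)/(2N)}$ with probability at least $1-\delta$. A standard symmetrization step bounds $\mathbb{E}[\Phi(S)]\le 2\mathfrak{R}_{SCE}^N(\mathcal{H})$, where the Rademacher complexity is taken directly on the composed loss as defined in the excerpt, so no contraction lemma is needed. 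Combining the two bounds yields \eqref{eq:for_each_ML}.

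For the excess-risk bound \eqref{eq:minimizer_ML}, let $\MLnoomega^\star\in\argmin_{\MLnoomega\in\mathcal{H}}R_{SCE}(\MLnoomega)$ and decompose
\[
R_{SCE}(\widehat{\MLnoomega}_n)-R_{SCE}(\MLnoomega^\star)=\underbrace{\bigl[R_{SCE}(\widehat{\MLnoomega}_n)-\widehat{R}_{SCE}(\widehat{\MLnoomega}_n)\bigr]}_{\text{(I)}}+\underbrace{\bigl[\widehat{R}_{SCE}(\widehat{\MLnoomega}_n)-\widehat{R}_{SCE}(\MLnoomega^\star)\bigr]}_{\text{(II)}}+\underbrace{\bigl[\widehat{R}_{SCE}(\MLnoomega^\star)-R_{SCE}(\MLnoomega^\star)\bigr]}_{\text{(III)}}.
\]
Term (II) is nonpositive because $\widehat{\MLnoomega}_n$ minimizes the empirical risk. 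I would bound (I) using \eqref{eq:for_each_ML} invoked at confidence level $\delta/2$, and bound (III) — which involves only the \emph{fixed} predictor $\MLnoomega^\star$ — by Hoeffding's inequality applied to the i.i.d.\ bounded terms $l_{SCE}(\MLnoomega^\star(\feature_i),\cost_i)\in[0,2D]$, also at level $\delta/2$, giving $2D\sqrt{\log(2/\delta)/(2N)}$. A union bound over the two events makes both hold simultaneously with probability at least $1-\delta$, and the two square-root terms combine into $4D\sqrt{\log(2/\delta)/(2N)}$, yielding \eqref{eq:minimizer_ML}.

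This argument is the standard empirical-risk-minimization template, so there is no deep obstacle. The care required is twofold: (i) using Proposition~\ref{prop:1} to obtain the sharp interval $[0,2D]$, which fixes the multiplicative constant, and (ii) being consistent about one-sided versus two-sided tail bounds so that the $\log(1/\delta)$ versus $\log(2/\delta)$ factors and the $2D$ versus $4D$ constants appear exactly as stated. A minor technical point to verify is the existence of the minimizer $\MLnoomega^\star$; if $\mathcal{H}$ is not compact one simply replaces it by an $\epsilon$-approximate minimizer and lets $\epsilon\to 0$, which leaves the bound unchanged.
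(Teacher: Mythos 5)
Your proof is correct and follows essentially the same route as the paper's: the paper establishes the bound $l_{SCE}\le 2D$ via Cauchy--Schwarz exactly as you do, and then simply cites the classical Rademacher generalization bound of Bartlett and Mendelson for the first inequality and Corollary~1 of El~Balghiti et al.\ (a Hoeffding-plus-union-bound argument identical to your three-term decomposition) for the second. You have merely written out in full the two standard arguments the paper invokes by reference, and your observation that nonnegativity of $\SCE$ (Proposition~\ref{prop:1}) is what keeps the loss range at width $2D$ rather than $4D$ is a correct point the paper leaves implicit.
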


\begin{proof}
    The SCE loss is bounded for all $\cost, \cost' \in \mathcal{Y}$ since $l_{SCE}(\predcost,\cost) = (\predcost - \cost)^\top (\solution - \solution (\predcost))  \leq \| \predcost - \cost\| \| \solution - \solution (\predcost)\| \leq 2D$ where the first inequality is due to Cauchy–Schwarz and the second inequality is due to our assumptions on the hypothesis class and the feasible region. Then, inequality \ref{eq:for_each_ML} follows directly from the classical generalization bound as shown in \cite{bartlett2002rademacher}. 

    The extension of inequality \ref{eq:for_each_ML} to inequality \ref{eq:minimizer_ML} is shown in the proof of Corollary 1 in  \cite{el2019generalization} using Hoeffding's inequality. 
\end{proof}

\section{Computational Experiments Demonstrating Zero Gradient }
\label{Appendix_simu}
In Section~\ref{section:surrogateloss}, we made the case for minimizing surrogate loss such as $\SCE$ instead of $\regret$.
Our main argument is for a relatively low value of smoothing parameter $\mu$,
$\regret$ will have zero gradient.
However, $\SCE$ will not have this problem.
We provided two illustrations considering small-scale optimization problems. 
Here, we justify this with higher-dimensional optimization problems.
We consider Top-1 selection problem with different number of items $M$.
\begin{equation}
    \label{eq:TopkSimu}
     \max_{\decisionvar \in \{0,1\} } \cost^\top \decisionvar \;\;\text{s.t.}\; \decisionvar^\top \mathbf{1}\leq 1
\end{equation}
\noindent $\cost =[y_1, \ldots, y_M]  \in \mathbb{R}^M$ is the vector denoting value of all the items and $\decisionvar = [v_1, \ldots, v_M]$ is the vector decision variables.
To replicate the setup of a PtO problem, we solve the optimization problem with $\predcost$.
Let us assume $y_i, \hat{y}_i \geq 0$.

Before, solving the problem with simulation, we will show one interesting aspect of this problem.
Note that when $\mu>0$, the following relaxed optimization problem is solved:
\begin{equation}
    \label{eq:TopkSimuwithmu}
     \max_{\decisionvar } \cost^\top \decisionvar - \frac{\mu}{2} ||\decisionvar||^2 \;\;\text{s.t.}\; \decisionvar^\top \mathbf{1}\leq 1 ;\; \; \decisionvar \geq 0 
\end{equation}
We point out that the solution to the unconstrained optimization problem is $v_i^\star = \frac{y_i}{\mu} >0$.

The augmented Lagrangian of \Eqref{eq:TopkSimuwithmu} is
\begin{equation}
    \mathbb{L} = \cost^\top \decisionvar - \frac{\mu}{2} ||\decisionvar||^2 + \lambda (1 - \decisionvar^\top \mathbf{1} ) + \bm{\sigma}^\top \decisionvar
\end{equation}
\noindent where $\lambda$ and $\bm{\sigma} = [\sigma_1, \ldots, \sigma_M]$ are dual variables. 
By differentiating $ \mathbb{L}$ with respect to $v_i$, we obtain one condition of optimality, which is the following:
\begin{align}
    y_i - \mu v_i - \lambda + \sigma_i = 0 
   \implies v_i = \frac{y_i - \lambda + \sigma_i}{\mu}
\end{align}
Without any loss of generality, let $y^{(1)} \geq y^{(2)} \geq \ldots y^{(M)}$. (In the d)
As, solution to the constrained optimization problem is $v_i >0$, $y^{(1)}$ will definitely be greater than zero.
Hence, $\sigma_i=0$ because of strict complementarity.
So, we can write $v^{(1)} - v^{(k)} = \frac{ y^{(1)} - y^{(k)} -\sigma^{(k)} }{\mu}$.
As, $v^{(1)} - v^{(k)} \leq 1$, we can write:
\begin{align}
    \frac{ y^{(1)} - y^{(k)} -\sigma^{(k)} }{\mu} \leq 1 \implies \mu \geq y^{(1)} - y^{(k)} -\sigma^{(k)}
\end{align}
So,
\begin{align}
\label{eq:simulation_inference}
y^{(1)} - y^{(k)} > \mu \implies \sigma^{(k)} > 0 \implies  y^{(k)}=0
\end{align}
This suggest that if $y^{(k) } < y^{(1)} - \mu $, only $v^{(1)}=1$ and all other decision variables will be zero in the optimal solution.

To generate the ground truth $\cost$, we randomly select $M$ integers without replacement from the set ${1, \dots, M}$. The predicted costs, $\predcost$, are generated by considering a different sample from the same set. As a result, $\cost$ and $\predcost$ contain the same numbers but in different permutations. It is important to note that all elements in both vectors are positive integer values.
We compute the solution to the optimization problem for $\cost$ and $\predcost$.
We solve the optimization problem with $\predcost$ using a `smoothed' optimization layer---\textsl{CvxpyLayer}.
In order to compare the gradients of $\regret$ and $\SCE$.
We compute the gradients of both the losses for multiple values of $M$ and $\mu$. For each configuration of $M$ and $\mu$, we run 20 simulations.
\begin{table}[]
    \centering
\begin{tabular}{lrrrrrrrr}
\toprule
& \multicolumn{6}{c}{M}\\  
\cmidrule{2-7}
$\mu$ & 5 & 10 & 20 & 40 & 80 & 100   \\
\midrule
0.100& 0.000& 0.000& 0.000& 0.000& 0.000& 0.000\\
0.500& 0.000& 0.000& 0.000& 0.000& 0.000& 0.000\\
0.990& 0.000& 0.000& 0.000& 0.000& 0.000& 0.001\\
1.050& 0.089& 0.089& 0.089& 0.089& 0.089& 0.089\\
1.500& 0.465& 0.466& 0.465& 0.465& 0.465& 0.464\\
2.000& 0.622& 0.622& 0.622& 0.622& 0.622& 0.622\\
5.000& 1.165& 1.165& 1.165& 1.165& 1.165& 1.165\\
\bottomrule
\end{tabular}
    \caption{We tabulate average Manhattan distance  between the solution of the `smoothed' problem and the solution of the original problem for different values of $M$ and $\mu$.}
    \label{tab:smilate_table}
\end{table}

Note that $y^{(1)} > y^{(2)} > \ldots y^{(M)}$ because of the way we created the dataset. Moreover, as all values in $\predcost$ and $\cost $ are integer, \Eqref{eq:simulation_inference} suggests if $\mu < 1$, the solution to the relaxed problem (\eqref{eq:TopkSimuwithmu}) will be binary.
So, the discussion in Section~\ref{section:surrogateloss} suggests that slight change of the cost parameter would not change the solution and hence the zero gradient problem would appear while differentiating $\regret$.

In Figure \ref{fig:SimulationGradient}, we plotted the average absolute values of the gradients of the two losses--- $\SCE$ and $\regret$.
As we hypothesized the gradient turns zero whenever $\regret$ is minimized with $\mu <1$.
It is true that for $\mu > 1$, $\regret $ have non-zero gradient. However, higher values of $\mu $ turns solution to the `smoothed' problem very different from the solution to the original problem.
We show this in Table \ref{tab:smilate_table} by displaying the average Manhattan distance between solutions of the true and `smoothed' problem for same $\predcost$.

We also highlight that, for the same values of $\mu$, the average Manhattan distances remain same across different $M$.
Examining the results of the simulations, we observed that the solution to the smoothed problem is fractional. For example, when $\mu=2$, the solution includes two non-zero values--- 0.77 and 0.23. Typically, the value 0.77 appears in the position corresponding to the highest value in $\predcost$, i.e., where there is a 1 in solution vector. As a result, the Manhattan distance becomes (1-0.77)+0.23 = 0.46. Interestingly, these values remain unchanged across different values of $M$. Therefore, the Manhattan distance remains constant as long as $\mu$ does not change.
\begin{figure}
    \centering
\includegraphics[width=1\linewidth]{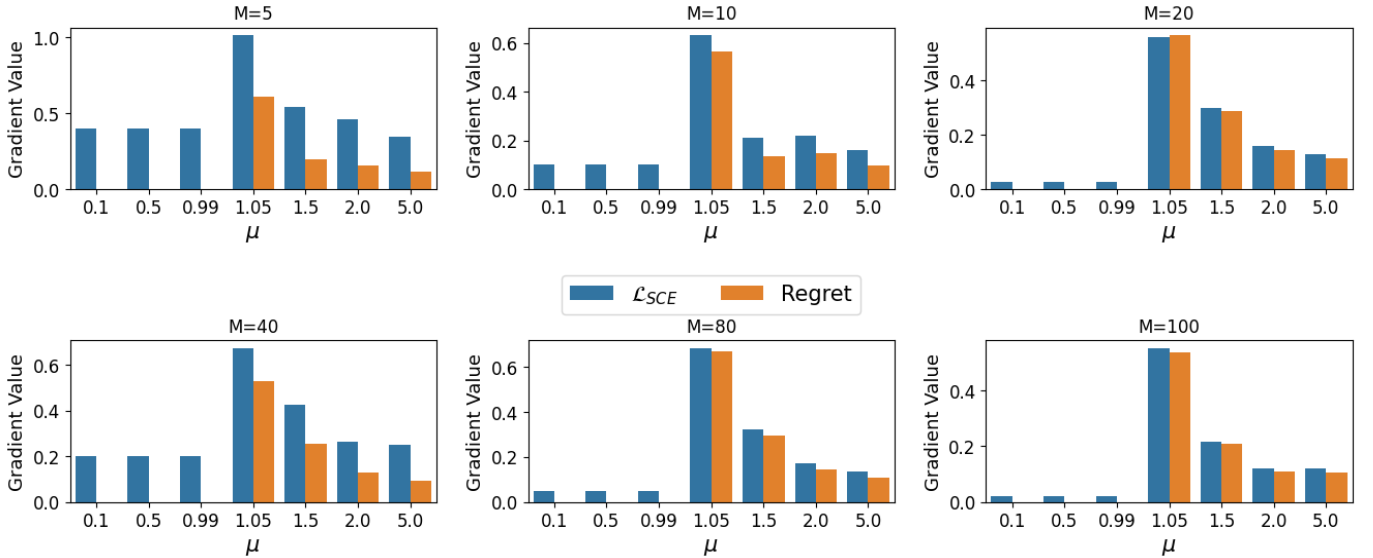}
    \caption{Results of Computational Simulation}
    \label{fig:SimulationGradient}
\end{figure}

\section{Demonstration of Learning with $\SCE$ versus Regret}
\label{Appendix:knapsack_demo}

We further illustrate this with a simple fractional knapsack problem, which is an LP.
Let us consider that we have two items and space for only one item.
This can be formulated as a minimization problem:\begin{equation*}
    \min\; \;  -y_1 v_1 - y_2 v_2 \; \;  \text{s.t.} \; \; v_1 + v_2 \leq 1; \; \; v_1, v_2 \geq 0
\end{equation*}
Let us assume the true values of $y_1$ and $y_2$ are $(0.8, 0.4)$. The corresponding solution is $(v_1,v_2)= (1,0)$.
The grey region in Figure \ref{fig:knpsackdemo} corresponds to any predictions satisfying $\hat{y}_1 > \hat{y}_2$. Such predictions will induce the true solution, resulting in zero regret.
Further assume that the initial predictions are 
$(\hat{y}_1, \hat{y}_2) = (0.1 , 0.01)$.
We show the progression of predictions by epochs when regret and SCE are used as training loss, using the smoothed optimization problem with \textcolor{blue}{blue} and \textcolor{Green}{green} lines, respectively in Figure \ref{fig:knpsackdemo}.
The predictions does not change with training epochs when regret is used as the loss because the derivatives of regret with respect to $\hat{y}_1$ and $\hat{y}_2$ are zero.
On the other hand, when $\m{L}_{SCE}$ is used as the loss, $(\hat{y}_1, \hat{y}_2)$ gradually move from the white region to the grey region, eventually resulting in zero regret.
Note that increasing the strength of smoothing may provide non-zero gradient across the space.
But this will entirely alter the optimization problem's solution.
For instance, in this knapsack example, high values of $\mu$ would make both $v_1$ and $v_2$ close to zero.

\begin{figure}
    \centering
\includegraphics[width=0.45\linewidth]{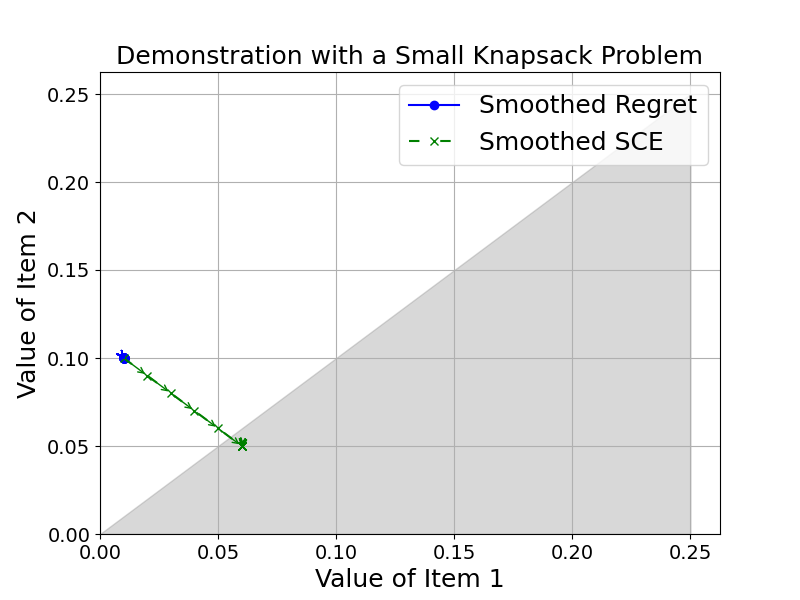}
    \caption{Progression of predictions by epochs when the smoothed regret and SCE are used as training losses. }
     \label{fig:knpsackdemo}
     \vspace{2em}
\end{figure}

\section{Description of The Optimization Problem and Data Generation Process}
\label{appendix:ILP_formulation}
In this section, we first describe the optimization problems along with their formulations, followed by details of the data generation process and the ML models.
\subsection{Description of the Optimization Problems}\label{appendix:optimization_formulation}
\paragraph{Shortest Path Problem.}
It is a shortest path problem on a $k \times k$ grid, with the objective of going from the southwest corner of the grid to the northeast corner where the edges can go either north or east. This grid consists of $k^2$ nodes and $2\times k \times (k-1)$ edges. 
Let, $y_{ij}$ is the cost of going from node $i$ to node $j$ and the decision variable $w_{ij}$ takes the value 1 if and only if the edge from node $i$ to node $j$ is traversed.
Then, the shortest path problem from going to node $s$ to node $t$ can be formulated as an LP problem in the following form:
\begin{subequations}
    \begin{align}
        \min_{w_{ij}} & \sum_{(i,j) \in \m{E}} y_{ij} w_{ij} \\
    \text{s.t.}\; & \nonumber \\
    & \sum_{(i,j)  \in \m{E} } w_{ij} - \sum_{(k,i)  \in \m{E} } w_{ki} =
    \begin{cases}
        1 \; & \text{if } \; i=s \\
        -1 \; & \text{if } \; i=t \\
         0 \; & \text{otherwise} \\
    \end{cases}\\
    & w_{ij}  \in \mathbb{R}^+
    \end{align}
\end{subequations}
\paragraph{Knapsack Problem.}
In a knapsack problem the goal of the optimization problem is to choose a maximal value subset from a given set of items, subject to some capacity constraints.
Let the set contains $\mathit{N_{items}}$ number of items and the value of each item is $y_i$.
The solution must satisfy capacity constraints in multiple dimensions. Let $C_j$ is the capacity in dimension $j$ and $\phi_{(i,j)}$ is the weight of item $i$ in dimension $j$.
This optimization can be modeled as an integer linear programming (ILP) as follows:
\begin{subequations}
    \label{eq:knapsackformulation}
    \begin{align}
   \min_{w_{1}, \ldots, w_{\mathit{N_{items}}}} & \sum_{i=1}^\mathit{N_{items}} (-y_i) w_i \\
   \text{s.t.}\; & \nonumber \\
    & \sum_{i=1}^\mathit{N_{items}} \phi_{(i,j)} w_i \leq C_j \; ; \; \forall j \\
    & w_i \in \{0,1\}
   \end{align}
\end{subequations}

 The Top-K selection can be viewed as a special case of the knapsack problem. In the Top-K, there is only one dimension and the weight of each item is $1$ and the capacity, $C=K$.
\paragraph{Capacitated facility location problem.}
Provided a set of locations for opening facilities, $F$, and a set of customers, $C$, the objective is to minimize the total operation cost while satisfying the customer demands of a single product.
Each customer \( c \in C \) has a demand \( \mathrm{D}_c \), and each facility \( f \in F \) has a capacity \( \mathrm{Cap}_f \) and incurs a fixed opening cost \( \mathrm{FC}_f \). 
A transportation cost is incurred to serve customer \( c \) from facility \( f \). 
The two constraints are to ensure that each customer's demand is fully met and ensuring the total demand assigned to a facility does not exceed its capacity.
The decision variables are:
\begin{itemize}
    \item \( w_{c,f} \in [0,1] \): the fraction of client \( c \)'s demand assigned to facility \( f \),
    \item \( W_f \in \{0,1\} \): a binary variable indicating whether facility \( f \) is open.
\end{itemize}
It is a an MILP, as it has both integer and continuous variables. The formulation of the problem is the following:
\begin{subequations}
\begin{align}
    \min_{w_{c,f},W_f } \quad & \sum_{c \in C} \sum_{f \in F} w_{c,f}  y_{c,f} \mathrm{D}_c + \sum_{f \in F} \mathrm{FC}_f \cdot W_f \label{eq:objective} \\
    \text{s.t.} \quad & \sum_{f \in F} w_{c,f} = 1, && \forall c \in C, \label{eq:c1} \\
    & \sum_{c \in C} \mathrm{D}_c \cdot w_{c,f} \leq \mathrm{Cap}_f \cdot W_f, && \forall f \in F, \label{eq:c2} \\
    & w_{c,f} \geq 0, \quad W_f \in \{0,1\}, && \forall c \in C, \forall f \in F. \label{eq:c3}
\end{align}
\end{subequations}
In the PtO version, the transportation costs are unknown, all other parameters are known.
\subsection{Description of the Data Generation Process}\label{appendix:data}
We use PyEPO library~\citep{pyepo} to generate training, validation and test datasets.
Each dataset consists of $\{ (\feature_i, \cost_i) \}_{i=1}^N$, which are generated synthetically.
The feature vectors are sampled from a multivariate Gaussian distribution with zero mean and unit variance, i.e., $\feature_i \sim \mathbf{N} (0, I_p)$, where p is the dimension of $\feature_i$.
To generate the cost vector, first a matrix $B \in \mathbb{R}^{K \times p} $ is generated, which represents the true underlying model, unknown to the modeler.
Each element in the cost vector $y_{i,j}$ is then generated according to the following formula:
\begin{equation}
    \cost_{ij} = \bigg [ \frac{1}{3.5^{\text{Deg}}}\bigg(\frac{1}{\sqrt{p}} \big(B \feature_i  \big) +3  \bigg)^{\text{Deg}  } +1 \bigg]\xi_i^j
\end{equation}
The \emph{Deg} is `model misspecification' parameter. This is because a linear model is used as a predictive model in the experiment and a higher value of \emph{Deg} indicates the predictive model deviates more from the true underlying model and larger the prediction errors. $\xi_i^j$ is a multiplicative noise term sampled randomly from the uniform distribution $\xi_i^j \sim U[1- w, 1+ w ]$. $w$ is a noise-half width parameter, which is less than 1. 
Higher values of $w$ indicate a greater degree of noise perturbation.
We set \emph{Deg} to 6 and $w$ to 0.5 in all our experiments.
\section{Retrieving only the Active Constraints on the Knapsack Problem}
\label{Appendix:Cave}

Consider the two-dimensional knapsack example in Figure \ref{fig:cave}. The capacity constraint is given as $3 v_1 + 3 v_2 \leq 5$ .  If the objective vector $\cost$ lies within the union of the yellow and red cones, then the feasible solution (1,0) is optimal for the problem with the integrality constraint. 
So, the true normal cone is the union of the yellow and red cones.
Note that the constraint $3 v_1 + 3 v_2 \leq 5$ is not active, although it plays a key role in choosing the solution; in the absence of this constraint, the solution would be $(1,1)$.
In this case, the only active constraints are $v_1=1$ and $v_2=0$.
As the CaVE approach stores only these two constraints, 
the yellow cone is considered as the optimality cone. This example shows that the mismatch between the cone of optimality of the integer knapsack and its relaxation can be non-trivial (the red cone in Figure \ref{fig:cave}).
This attributes to the poor performance of the CaVE approach in the Knapsack problem.
\begin{figure}
\begin{center}
     \includegraphics[width=0.45\textwidth]{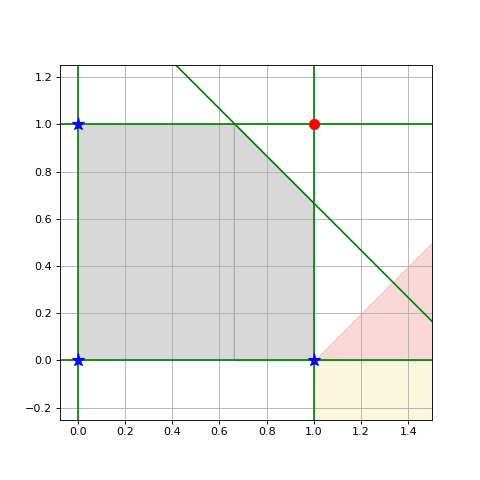}
\end{center}
    \caption{A numerical illustration to show why the approach of retrieving active constraints does not work in the Knapsack problems.}
    \vspace{2em}
    \label{fig:cave}
\end{figure}


\section{Experimental Details}
\label{appendix:expsetup}

\subsection{Hyperparameter }
We tuned all hyperparameters on the validation set and found that a learning rate of $0.005$ consistently yielded the lowest validation regret across all problems. Since the mapping between $\cost$ and $\feature$ follows the same data generation process in every case, we adopted this learning rate for all experiments.
Additionally, PFY includes an extra hyperparameter: the temperature parameter.
After tuning, based on validation regret, we set the following values of the temperature parameter.

\begin{table}[]
    \centering
    \begin{tabular}{cc}
    \toprule
  Problem & Temperature parameter value\\
  \toprule
        SP & 1. \\
        KP & 1.\\
        CFL & 1.\\
       \bottomrule
    \end{tabular}
    \caption{Value of Temperature Hyperparameter of the PFY Method}
    \label{tab:my_label}
\end{table}
\subsection{Implementation of DYS}
We adopt the implementation by \citet{mckenzie2024learning} to implement DYS-Net
\footnote{\url{https://github.com/mines-opt-ml/fpo-dys}}.
DYS-Net includes a few hyperparameters:
$\mu$, controls the strength of smoothing; scaling parameter $\alpha \in (0, 2/\mu)$; number of time \Eqref{eq:DYS_iteration} is iterated.   (For detailed explanations of these parameters, please refer to the original papers.)
In practice, we set the number of iterations to  100, because this results in good quality performances across all instances. We agree it might be possible to tune to specific problems to achieve even faster implementation without compromising performance.
Each iteration is implemented as a multi-layer perceptron (MLP), making the implementation computationally efficient.
We tune the parameter $\alpha$ on a validation set. In most cases, we keep it low $\approx 0.01$.
 Notably, DYS-Net implementation does not require pretraining, as DYS-Net contains no trainable parameters.
\subsection{DYS-Net Hyperparameter}
\begin{table}[]
    \centering
    \begin{tabular}{ccc}
    \toprule
  Problem & $\alpha$ & $\tau$\\
  \toprule
        SP-5 & 0.01 & 0. \\
        SP-10 & 0.01 & 1. \\
        SP-15 & 0.01 & 0. \\
        SP-25 & 0.01 & 0. \\
        KP-100 & 0.01 & 1.\\
        KP-200 & 0.01 & 0.\\
        KP-400 &  0.01 & 0.\\
        KP-500 &  0.01 & 0.\\
        CFL-50 & 0.01 & 0.1\\
        CFL-100 & 0.01 & 0.1\\
        CFL-200 & 0.01 & 0.1\\
        CFL-250 & 0.01 & 0.1\\
       \bottomrule
    \end{tabular}
    \vspace{1em}
    \caption{Value of $SCE^{DYS}$ Hyperparameters}
    \label{tab:my_label}
\end{table}



\newpage
\subsection{Tabular Result}
\begin{table}[]
    \centering
\begin{tabular}{llrr}
\toprule
 &  & Normalized Regret & Runtime (sec.) \\
Gridsize & Model &  &  \\
\midrule
\multirow[t]{12}{*}{5} & $Regret^{CVX}$ & 0.36 & 3.66 \\
 & $Regret^{DYS}$ & 0.35 & 0.93 \\
 & $SCE^{CVX}$ & 0.31 & 3.67 \\
 & $SCE^{DYS}$ & 0.32 & 0.94 \\
 & $SPO_{+}^{CVX}$ & 0.32 & 3.13 \\
 & $SPO_{+}^{DYS}$ & 0.33 & 0.92 \\
 & $SPO_{+}^{relax}$ & 0.32 & 0.73 \\
 & $SqDE^{CVX}$ & 0.36 & 3.71 \\
 & $SqDE^{DYS}$ & 0.34 & 0.73 \\
 & CaVE & 0.33 & 3.42 \\
 & MSE & 0.45 & 0.26 \\
 & PFY & 0.32 & 0.73 \\
\cline{1-4}
\multirow[t]{12}{*}{10} & $Regret^{CVX}$ & 0.45 & 3.25 \\
 & $Regret^{DYS}$ & 0.41 & 1.63 \\
 & $SCE^{CVX}$ & 0.35 & 3.24 \\
 & $SCE^{DYS}$ & 0.37 & 1.44 \\
 & $SPO_{+}^{CVX}$ & 0.37 & 3.48 \\
 & $SPO_{+}^{DYS}$ & 0.36 & 1.51 \\
 & $SPO_{+}^{relax}$ & 0.37 & 2.10 \\
 & $SqDE^{CVX}$ & 0.52 & 3.31 \\
 & $SqDE^{DYS}$ & 0.41 & 1.56 \\
 & CaVE & 0.48 & 11.89 \\
 & MSE & 0.46 & 0.26 \\
 & PFY & 0.37 & 2.12 \\
\cline{1-4}
\multirow[t]{9}{*}{15} & $Regret^{DYS}$ & 0.44 & 2.08 \\
 & $SCE^{DYS}$ & 0.40 & 2.10 \\
 & $SPO_{+}^{DYS}$ & 0.39 & 2.09 \\
 & $SPO_{+}^{p=5\%}$ & 0.44 & 2.94 \\
 & $SPO_{+}^{relax}$ & 0.40 & 6.15 \\
 & $SqDE^{DYS}$ & 0.47 & 2.04 \\
 & CaVE & 0.53 & 20.93 \\
 & MSE & 0.48 & 1.58 \\
 & PFY & 0.42 & 6.12 \\
\cline{1-4}
\multirow[t]{8}{*}{25} & $Regret^{DYS}$ & 0.48 & 3.85 \\
 & $SCE^{DYS}$ & 0.47 & 3.85 \\
 & $SPO_{+}^{DYS}$ & 0.45 & 3.77 \\
 & $SPO_{+}^{p=5\%}$ & 0.54 & 7.10 \\
 & $SPO_{+}^{relax}$ & 0.46 & 16.22 \\
 & $SqDE^{DYS}$ & 0.58 & 3.70 \\
 & MSE & 0.51 & 3.62 \\
 & PFY & 0.49 & 16.29 \\
\cline{1-4}
\end{tabular}    \caption{Result on Shortest Path Instances}
    \label{tab:sp}
\end{table}

\begin{table}[]
    \centering
\begin{tabular}{llrr}
\toprule
 &  & Normalized Regret & Runtime (sec.) \\
numitems & Model &  &  \\
\midrule
\multirow[t]{13}{*}{100} & $Regret^{CVX}$ & 0.30 & 6.00 \\
 & $Regret^{DYS}$ & 0.22 & 1.59 \\
 & $SCE^{CVX}$ & 0.18 & 5.97 \\
 & $SCE^{DYS}$ & 0.18 & 1.48 \\
 & $SPO_{+}^{CVX}$ & 0.19 & 5.78 \\
 & $SPO_{+}^{DYS}$ & 0.20 & 1.55 \\
 & $SPO_{+}^{ILP}$ & 0.20 & 2.29 \\
 & $SPO_{+}^{relax}$ & 0.20 & 1.31 \\
 & $SqDE^{CVX}$ & 0.24 & 5.96 \\
 & $SqDE^{DYS}$ & 0.18 & 1.55 \\
 & CaVE & 0.76 & 8.64 \\
 & MSE & 0.23 & 0.33 \\
 & PFY & 0.18 & 5.97 \\
\cline{1-4}
\multirow[t]{13}{*}{200} & $Regret^{CVX}$ & 0.32 & 6.38 \\
 & $Regret^{DYS}$ & 0.24 & 1.98 \\
 & $SCE^{CVX}$ & 0.19 & 6.41 \\
 & $SCE^{DYS}$ & 0.19 & 2.11 \\
 & $SPO_{+}^{CVX}$ & 0.19 & 6.42 \\
 & $SPO_{+}^{DYS}$ & 0.20 & 2.06 \\
 & $SPO_{+}^{ILP}$ & 0.20 & 4.98 \\
 & $SPO_{+}^{relax}$ & 0.20 & 2.28 \\
 & $SqDE^{CVX}$ & 0.31 & 6.40 \\
 & $SqDE^{DYS}$ & 0.20 & 1.93 \\
 & CaVE & 0.76 & 9.62 \\
 & MSE & 0.24 & 0.34 \\
 & PFY & 0.18 & 12.97 \\
\cline{1-4}
\multirow[t]{10}{*}{400} & $Regret^{DYS}$ & 0.25 & 2.65 \\
 & $SCE^{DYS}$ & 0.19 & 2.76 \\
 & $SPO_{+}^{DYS}$ & 0.21 & 2.72 \\
 & $SPO_{+}^{ILP}$ & 0.20 & 8.65 \\
 & $SPO_{+}^{p=5\%}$ & 0.20 & 1.56 \\
 & $SPO_{+}^{relax}$ & 0.20 & 4.63 \\
 & $SqDE^{DYS}$ & 0.20 & 2.46 \\
 & CaVE & 0.71 & 10.28 \\
 & MSE & 0.24 & 0.26 \\
 & PFY & 0.17 & 24.05 \\
\cline{1-4}
\multirow[t]{10}{*}{500} & $Regret^{DYS}$ & 0.24 & 3.85 \\
 & $SCE^{DYS}$ & 0.18 & 3.86 \\
 & $SPO_{+}^{DYS}$ & 0.20 & 3.81 \\
 & $SPO_{+}^{ILP}$ & 0.19 & 10.60 \\
 & $SPO_{+}^{p=5\%}$ & 0.20 & 2.33 \\
 & $SPO_{+}^{relax}$ & 0.19 & 5.60 \\
 & $SqDE^{DYS}$ & 0.20 & 3.87 \\
 & CaVE & 0.68 & 16.30 \\
 & MSE & 0.23 & 0.27 \\
 & PFY & 0.17 & 31.73 \\
\cline{1-4}
\end{tabular}    \caption{Result on Knapsack Instances}
    \label{tab:kp}
\end{table}

\begin{table}[]
    \centering
\begin{tabular}{lllrrr}
\toprule
&  &  & Normalized Regret & Runtime (sec.) \\
No. of Customers & No. of  Facilities & Model &  &  \\
\midrule
\multirow[t]{13}{*}{50} & \multirow[t]{13}{*}{5} & $Regret^{CVX}$ & 0.22 & 8.01 \\
 &  & $Regret^{DYS}$ & 0.09 & 1.65 \\
 &  & $SCE^{CVX}$ & 0.09 & 8.03 \\
 &  & $SCE^{DYS}$ & 0.08 & 1.72 \\
 &  & $SPO_{+}^{CVX}$ & 0.09 & 7.23 \\
 &  & $SPO_{+}^{DYS}$ & 0.09 & 1.58 \\
 &  & $SPO_{+}^{ILP}$ & 0.09 & 4.94 \\
 &  & $SPO_{+}^{relax}$ & 0.09 & 3.31 \\
 &  & $SqDE^{CVX}$ & 0.19 & 8.02 \\
 &  & $SqDE^{DYS}$ & 0.09 & 1.62 \\
 &  & CaVE & 0.11 & 12.19 \\
 &  & MSE & 0.11 & 0.34 \\
 &  & PFY & 0.08 & 4.98 \\
\cline{1-5} \cline{2-5}
\multirow[t]{13}{*}{100} & \multirow[t]{13}{*}{10} & $Regret^{CVX}$ & 0.20 & 20.48 \\
 &  & $Regret^{DYS}$ & 0.18 & 2.99 \\
 &  & $SCE^{CVX}$ & 0.09 & 20.88 \\
 &  & $SCE^{DYS}$ & 0.09 & 2.89 \\
 &  & $SPO_{+}^{CVX}$ & 0.12 & 21.85 \\
 &  & $SPO_{+}^{DYS}$ & 0.12 & 2.76 \\
 &  & $SPO_{+}^{ILP}$ & 0.12 & 15.47 \\
 &  & $SPO_{+}^{relax}$ & 0.12 & 11.53 \\
 &  & $SqDE^{CVX}$ & 0.11 & 19.38 \\
 &  & $SqDE^{DYS}$ & 0.12 & 2.79 \\
 &  & CaVE & 0.12 & 34.83 \\
 &  & MSE & 0.14 & 0.38 \\
 &  & PFY & 0.10 & 16.66 \\
\cline{1-5} \cline{2-5}
\multirow[t]{9}{*}{200} & \multirow[t]{9}{*}{10} & $Regret^{DYS}$ & 0.25 & 4.62 \\
 &  & $SCE^{DYS}$ & 0.10 & 4.55 \\
 &  & $SPO_{+}^{DYS}$ & 0.15 & 4.38 \\
 &  & $SPO_{+}^{ILP}$ & 0.15 & 23.53 \\
 &  & $SPO_{+}^{p=5\%}$ & 0.19 & 3.92 \\
 &  & $SPO_{+}^{relax}$ & 0.15 & 22.06 \\
 &  & $SqDE^{DYS}$ & 0.16 & 4.52 \\
 &  & MSE & 0.17 & 0.24 \\
 &  & PFY & 0.13 & 23.77 \\
\cline{1-5} \cline{2-5}
\multirow[t]{9}{*}{250} & \multirow[t]{9}{*}{10} & $Regret^{DYS}$ & 0.25 & 4.35 \\
 &  & $SCE^{DYS}$ & 0.10 & 4.25 \\
 &  & $SPO_{+}^{DYS}$ & 0.15 & 4.08 \\
 &  & $SPO_{+}^{ILP}$ & 0.15 & 28.85 \\
 &  & $SPO_{+}^{p=5\%}$ & 0.19 & 4.36 \\
 &  & $SPO_{+}^{relax}$ & 0.15 & 27.70 \\
 &  & $SqDE^{DYS}$ & 0.16 & 4.24 \\
 &  & MSE & 0.17 & 0.27 \\
 &  & PFY & 0.13 & 29.24 \\
\cline{1-5} \cline{2-5}
\end{tabular}    \caption{Result on Capacitated Facility Location Problem Instances}
    \label{tab:cfl}
\end{table}

\end{document}